\documentclass[11pt]{article}

\usepackage[margin=1in]{geometry}
\usepackage{times}
\usepackage{microtype}
\usepackage{graphicx}
\usepackage{subcaption}
\usepackage{booktabs}
\usepackage{amsmath}
\usepackage{amssymb}
\usepackage{mathtools}
\usepackage{amsthm}
\usepackage{tikz}
\usepackage{float}
\usepackage{flafter}
\usepackage[section]{placeins}
\usepackage[round,authoryear]{natbib}
\usepackage{xcolor}
\usepackage{url}
\usepackage{hyperref}
\hypersetup{
    pdftitle={Beyond Heuristic Tuning: Power-Calibrated LLM Watermarking},
    pdfauthor={Xiaopu Wang, Zelin He, Chengyuan Liu, Runze Li},
    colorlinks=true,
    linkcolor=blue,
    citecolor=blue,
    urlcolor=blue
}

\usepackage[capitalize,noabbrev]{cleveref}
\usetikzlibrary{positioning}
\let\cite\citep

\theoremstyle{plain}
\newtheorem{theorem}{Theorem}[section]

\newtheorem{lemma}[theorem]{Lemma}

\theoremstyle{definition}

\newtheorem{assumption}[theorem]{Assumption}
\theoremstyle{remark}

\newcommand{\E}{\mathbb{E}}
\newcommand{\Prob}{\mathbb{P}}
\newcommand{\Var}{\operatorname{Var}}

\title{Beyond Heuristic Tuning: Power-Calibrated LLM Watermarking}
\author{
Xiaopu Wang \quad Zelin He \quad Chengyuan Liu \quad Runze Li\\
Department of Statistics, Pennsylvania State University\\
University Park, PA, USA\\
\texttt{xmw5221@psu.edu}
}
\date{}

\begin{document}

\maketitle
\begin{abstract}

Logit-based watermarking is a widely used mechanism for identifying LLM generated content, yet its effectiveness is governed by a fundamental trade-off between detectability and semantic distortion. Existing analyses provide limited guidance for principled hyperparameter selection, leaving practical deployments reliant on heuristic tuning. In this work, we develop a power-calibrated statistical framework that establishes explicit quantitative relationships between watermark hyperparameters, detection power, and distortion. This characterization transforms watermark design into a guided optimization problem. Building on these results, we derive practical parameter selection procedures that achieve optimal trade-offs under constraints. Extensive experiments across multiple language models and datasets validate the theory and demonstrate that the proposed framework consistently identifies Pareto-optimal points.

\end{abstract}

\section{Introduction}

Large language models (LLMs) have achieved remarkable progress in natural language processing, enabling high performance across a wide range of tasks including reasoning, programming, and creative writing~\cite{achiam2023gpt,liu2024deepseek,team2024gemini}. Concerns have been raised regarding potential misuse~\cite{shumailov2023curse}. One mitigation strategy is to enable identification of machine-generated text. However, recent work has shown that post-hoc detection methods that distinguish LLM-generated text from human text after generation are fundamentally fragile~\cite{mitchell2023detectgpt,sadasivan2023can}. These limitations motivate proactive approaches that embed signals directly into generation.

Text watermarking provides a scalable mechanism for establishing the provenance of language-model-generated content. Broadly, watermarking schemes introduce controlled statistical dependencies between generated text and a secret key~\cite{kuditipudi2023robust,Li_2025,kirchenbauer2024watermarklargelanguagemodels}. Among these, logit-based watermarking has emerged as a practical and widely adopted paradigm. These methods introduce small, structured perturbations to the model’s logits, which subtly bias the sampling distribution toward pseudo-random token subsets, preserving the native sampling process while enabling efficient statistical detection.

This paper focuses on logit-based watermarking and addresses a central unresolved challenge: \textit{a precise quantification of the trade-off between detectability and distortion}. Stronger logit perturbations improve detection while simultaneously inducing larger distributional shifts, degrading semantic fidelity. Recent work has characterized this trade-off under different objectives, including log-perplexity analysis~\cite{pmlr-v235-wouters24a} and KL-constrained formulations~\cite{cai2024betterstatisticalunderstandingwatermarking}. However, these frameworks provide limited theoretical guidance for parameter calibration, leaving current systems reliant on heuristic tuning without reliable performance guarantees across deployment settings.

In this work, we address this limitation by developing a power-calibrated statistical framework. Our approach establishes explicit quantitative relationships between watermark hyperparameters, detectability, and distortion. This quantification transforms watermark design from heuristic trial-and-error into a guided optimization problem. Specifically,
our contributions can be summarized as follows:

$\bullet$ \textbf{Theoretical Characterization} We develop a framework that yields \textit{closed-form relationships} linking watermark hyperparameters to detection power and distortion, providing a characterization of the detectability-distortion trade-off.\\
$\bullet$ \textbf{Principled Parameter Selection} We derive procedures for selecting watermark parameters that achieve optimal points under power or distortion constraints, reducing hyperparameter tuning to a tractable optimization problem.\\
$\bullet$ \textbf{Empirical Validation} We validate our theoretical predictions in multiple language models and datasets, demonstrating that the proposed framework consistently identifies Pareto-optimal configurations and outperforms heuristic parameter selection strategies in practice.

\section{Preliminary}\label{preliminary}
\paragraph{Notation}
Let $V$ denote the vocabulary, with size $|V|$.
For a generated token sequence $(x_1,\dots,x_n)$, we denote the prefix up to position $i$ by $x_{<i} := (x_1,\dots,x_{i-1})$.
At each step $i$, the base language model induces a distribution $P_i$ over $V$, where
$p_i(k) = \Prob(x_i = k \mid x_{<i})$.
Watermarking modifies this distribution into $Q_i$, with probabilities
$q_i(k) = \Prob_\delta(x_i = k \mid x_{<i})$, where $\Prob_\delta$ denotes the probability measure under watermarking.
\paragraph{Logit-Based Watermark} We focus on the logit-based KGW watermarking framework~\citep{kirchenbauer2024watermarklargelanguagemodels}, which has been widely studied in this line of work~\cite{cai2024betterstatisticalunderstandingwatermarking,kirchenbauer2024reliabilitywatermarkslargelanguage,pmlr-v235-wouters24a,zhao2023provable}. At each generation step $i$, the vocabulary $V$ is pseudo-randomly partitioned into a \emph{green list} $G_i \subset V$ of size $\gamma |V|$ and its complement $R_i := V \setminus G_i$.
Watermark embedding is performed by adding a constant bias $\delta>0$ to the logits of green-list tokens. The two parameters $(\gamma,\delta)$ govern the process jointly.
Let $l_{i,k}$ denote the original logit of token $k$ at step $i$. The modified logits are
\[
l'_{i,k} =
\begin{cases}
l_{i,k} + \delta, & k \in G_i,\\
l_{i,k}, & k \in R_i.
\end{cases}
\]
which yields the watermarked distribution $Q_i$. Detection is then formulated as a hypothesis test comparing the observed frequency of green-list tokens against the null hypothesis $H_0$ corresponding to non-watermarked generation.

\paragraph{Detectability-Distortion Trade-Off}
An effective watermark must balance two competing objectives:
\begin{enumerate}
    \item \textbf{Detectability:} the reliability with which the watermark can be identified from generated text.
    \item \textbf{Distortion:} the deviation of the watermarked output distribution from the original model behavior.
\end{enumerate}

In the KGW framework, such a trade-off is governed jointly by \((\gamma,\delta)\): increasing \(\delta\) strengthens the bias and improves detectability, but simultaneously increases distortion, while \(\gamma\) controls the fraction of tokens affected by the bias. Our goal is to explicitly characterize this trade-off by deriving quantitative relationships that link \((\gamma,\delta)\) to detectability and distortion, thereby transforming watermark design from ad hoc parameter tuning into a theoretically grounded optimization problem.

\subsection{Related Work and Existing Gap}
A principled optimization of watermarking strategies requires metrics that faithfully capture both semantic fidelity and detection reliability. We start with a review of related work on metric selection and outline the existing gap.
\paragraph{Distortion}
To quantify distortion, we seek a metric that reflects the extent to which the watermark alters the original model distribution. Prior work~\cite{pmlr-v235-wouters24a} quantifies semantic degradation via the shift in log perplexity, $\tilde{\mathbb{E}}[\log \text{PPL}] - \mathbb{E}[\log \text{PPL}]$, where perplexity is defined as:
\[\text{PPL}(x_{1:n}) = \exp\!\left( - \frac{1}{n} \sum_{t=1}^{n} \log P(x_t \mid x_{<t}) \right).
\]
While \cite{pmlr-v235-wouters24a} argue this metric is optimal under independence and fixed $\gamma$, the independence assumption rarely holds and optimal $\gamma$ remains unknown. Furthermore, \citet{cai2024betterstatisticalunderstandingwatermarking} demonstrate that this metric favors deterministic generation, which compromises output diversity.
Alternatively, they propose using the KL divergence between the watermarked and unwatermarked distributions. For a generation step $i$, this is defined as:
\[D_{\mathrm{KL}}(Q_i \,\|\, P_i) = \sum_{v \in V} q_i(v)\log \frac{q_i(v)}{p_i(v)}.
\]
While KL divergence offers a principled information-theoretic measure of the loss induced by watermarking, \citet{cai2024betterstatisticalunderstandingwatermarking} rely on plug-in estimates without formal theoretical justification. \textit{Consequently, it remains unclear whether these estimates accurately reflect the true divergence in watermarking scenarios.}

\paragraph{Detectability}
For detectability, our goal is to measure how reliably a watermarked sequence can be distinguished. \citet{pmlr-v235-wouters24a} propose using the expected green-token count difference under the watermarked distribution, given by:
\[
\tilde{\E}[\Delta N_G], \quad N_G := \sum_{i=1}^{n} \mathbf{1}_{x_i \in G_i}
\]
While \citet{cai2024betterstatisticalunderstandingwatermarking} analyze the per-step probability shift
\[
DG(i) = \sum_{k \in G_i} q_i(k) - \sum_{k \in G_i} p_i(k).
\]
The primary difference between the two lies in the level of aggregation: the former accumulates the discrepancy, while the latter captures the per-step value.
An increased discrepancy intuitively facilitates detection.
However, this heuristic does not necessarily provide an accurate characterization of detectability. From a statistical perspective, detectability is best defined in terms of \emph{statistical power}, where rejecting \(H_0\) corresponds to declaring the sample watermarked.
\textit{The remaining gap lies in identifying a proper representation for direct calculation of statistical power.}

\section{A Formal Statistical Framework}
\label{sec:theory}
We start with formalizing the hypothesis testing problem.
\subsection{Null Hypothesis}
\paragraph{Quantifying the Green-List Mechanism}
In the KGW generation process, a green list is generated via a hash function applied to the generation context. Hash functions are commonly modeled as pseudo-random functions that produce approximately source-independent outputs that are uniformly distributed over their range~\cite{hash}. We formalize this idea in the following assumption:
\begin{assumption}[Random green-list assignments]
\label{ass:iid-green}
The green lists $\{G_i\}_{i=1}^n$ are i.i.d. distributed over the vocabulary, and are independent of the generated token sequence $(x_1,\ldots,x_n)$.
\end{assumption}
Assumption \ref{ass:iid-green} isolates the randomness introduced by the watermarking mechanism and separates it from the intrinsic randomness of language model generation. Under Assumption~\ref{ass:iid-green}, we have the following result:
\begin{lemma}
\label{lem:null-bernoulli}
Let $I_i := \mathbf{1}\{x_i \in G_i\}$. Under Assumption~\ref{ass:iid-green}, the indicators $\{I_i\}_{i=1}^n$ are independent and identically distributed as
\[
I_i \sim \mathrm{Bernoulli}(\gamma).
\]
\end{lemma}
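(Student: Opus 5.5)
The plan is to condition on the token sequence and exploit the independence in Assumption~\ref{ass:iid-green}. I will read ``i.i.d.\ distributed over the vocabulary'' as: each $G_i$ is a uniformly random subset of $V$ of size $\gamma|V|$. Then, for every fixed token $v\in V$, $\Prob(v\in G_i)=\binom{|V|-1}{\gamma|V|-1}\big/\binom{|V|}{\gamma|V|}=\gamma$. This symmetry computation is the only place the specific distribution of $G_i$ enters. If the paper's intended reading is weaker, the argument needs only this marginal inclusion property, and I would state that property explicitly as the interpretation of the assumption.

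Next, fix any realization $(x_1,\dots,x_n)=(v_1,\dots,v_n)\in V^n$ and any $(b_1,\dots,b_n)\in\{0,1\}^n$. Since the lists are independent of the tokens, conditioning on the token sequence leaves the joint law of $(G_1,\dots,G_n)$ unchanged. Because the $G_i$ are mutually independent, we get
\[
\Prob\bigl(I_1=b_1,\dots,I_n=b_n \mid x_{1:n}=v_{1:n}\bigr)=\prod_{i=1}^n \Prob(\mathbf{1}\{v_i\in G_i\}=b_i)=\prod_{i=1}^n \gamma^{b_i}(1-\gamma)^{1-b_i}.
\]
The right-hand side does not depend on $v_{1:n}$. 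Averaging over the law of $x_{1:n}$ therefore gives the same product as the unconditional joint pmf of $(I_1,\dots,I_n)$. This identifies the $I_i$ as i.i.d.\ $\mathrm{Bernoulli}(\gamma)$.

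The main subtlety is the conditioning step. In the actual generation process, $x_i$ depends on $G_i$ and on earlier lists (for example, through watermark sampling). The stated independence of the whole list sequence from the whole token sequence is exactly what lets us treat $v_{1:n}$ as fixed. Under $H_0$, non-watermarked generation, this independence is natural, because tokens are drawn from $P_i$ without reference to $G_i$.

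I would emphasize that the lemma is a null-hypothesis statement. To avoid a measure-theoretic detour, I would phrase the conditioning via the joint pmf on the finite space $V^n\times(2^V)^n$, which makes the factorization immediate.
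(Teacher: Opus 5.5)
Your proof is correct and follows essentially the same route as the paper: condition on the token sequence, use $\Prob(v\in G_i)=\gamma$ for each fixed $v$ together with the mutual independence of the lists and their independence from the tokens to factorize the conditional joint pmf, and then average over the tokens. The differences are minor. The paper gets the conditional factorization by induction on $k$ via the tower property, whereas you get it in one step from the joint independence of $(G_1,\dots,G_n)$ from $x_{1:n}$. You also justify the inclusion probability $\gamma$ from the uniform-subset reading, which the paper simply asserts.
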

\begin{proof}
    See Appendix \ref{app:null-bernoulli}.
\end{proof}
Therefore, for the sequence-level statistic
\[
S_n=\sum_{i=1}^n I_i,
\]
under \(H_0\), \(S_n\) admits the following central-limit normal approximation:
\[
Z_n^{(0)}
:=
\frac{S_n-n\gamma}{\sqrt{n\gamma(1-\gamma)}}
\xrightarrow{d}
\mathcal N(0,1).
\]
\paragraph{Watermark Detection as Hypothesis Testing}
The KGW framework increases the likelihood of sampling green-list tokens. Let \(\Prob_0\) denote the unwatermarked generation distribution and let \(\Prob_\delta\) denote the watermarked distribution induced by the logit shift \(\delta\). Concretely, \(\Prob_0(x_i=k\mid x_{<i})\) is the original next-token distribution of the language model, while \(\Prob_\delta(x_i=k\mid x_{<i})\) is the distribution after adding the green-list logit bias. Since \(G_i\) is determined by the context and secret key, this notation suppresses the conditioning on \(G_i\) when no confusion is possible. Although the watermark perturbs the next-token distribution at each generation step, detection is performed at the sequence level. The token-level characterization is
\[
H_0:\ \Prob_0(x_i \in G_i) = \gamma,
\qquad
H_1:\ \Prob_\delta(x_i \in G_i) > \gamma.
\]
\paragraph{Remark.}
The original null statement, ``text is not watermarked,'' is composite: it may
include human-written text and outputs from different unwatermarked models. Since
the distribution of human text is generally unavailable, this null is difficult
to specify directly as a distributional hypothesis on \(V^n\). The KGW
formulation avoids modeling these heterogeneous source distributions and instead
uses the keyed green-list signal. Under Assumption~\ref{ass:iid-green}, any
source that does not use the secret key and watermarking rule has green-list
membership probability \(\gamma\); for unwatermarked model outputs this gives
\(\Prob_0(x_i\in G_i)=\gamma\). Under the watermarked distribution, the logit
bias increases the same probability to \(\gamma'(\gamma,\delta)>\gamma\), as
characterized below. Thus \(I_i=\mathbf 1\{x_i\in G_i\}\) records the KGW signal
observed by the keyed detector, and \(S_n=\sum_{i=1}^n I_i\) aggregates this
signal over the sequence.

The actual detector is the sequence-level one-sided test
\[
\frac{S_n-n\gamma}{\sqrt{n\gamma(1-\gamma)}}>z_{1-\alpha}.
\]
However, a key challenge is that the alternative hypothesis is only partially specified at this stage. To derive a principled expression for detection power, it is necessary to explicitly model the effect of watermarking on the token distribution under the alternative.

\subsection{Token-Level Alternative}
\paragraph{Token-Level Effect}
We first characterize how the KGW watermarking mechanism alters the probability of generating green-list tokens at the token level.

Let \(\boldsymbol{\sigma}^{(i)} \in \Delta^{|V|-1}\) denote the next-token probability (NTP) vector at position \(i\), where \(V\) is the vocabulary and $\Delta^{|V|-1} = \left\{ p \in \mathbb{R}^V \;\middle|\; p_k \ge 0,\; \sum_{k\in V} p_k = 1 \right\}
$ denotes the probability simplex. The following lemma provides an explicit relationship between the green-token probability under the unwatermarked and watermarked generation processes.

\begin{lemma}
\label{lem:prob-alt-ori}
Conditioned on the NTP vector \(\boldsymbol{\sigma}^{(i)}\), we have
\[
\Prob_{\delta}(x_i \in G_i \mid \boldsymbol{\sigma}^{(i)})
=
\frac{e^{\delta}\,\Prob(x_i \in G_i \mid \boldsymbol{\sigma}^{(i)})}
{(e^{\delta}-1)\,\Prob(x_i \in G_i \mid \boldsymbol{\sigma}^{(i)}) + 1}.
\]
\end{lemma}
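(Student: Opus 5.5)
Conditional on the NTP vector $\boldsymbol{\sigma}^{(i)}$ and on the green list $G_i$, the unwatermarked green mass is
\[
\pi := \sum_{k\in G_i}\sigma^{(i)}_k .
\]
By softmax, $\sigma^{(i)}_k = e^{l_{i,k}}/\sum_{j} e^{l_{i,j}}$. Adding $\delta$ to the green logits gives
\[
q_i(k)=\frac{e^{\delta\mathbf 1\{k\in G_i\}}\,\sigma^{(i)}_k}{e^{\delta}\pi+(1-\pi)},
\]
after dividing numerator and denominator by the original normalizer $\sum_j e^{l_{i,j}}$. Summing over $k\in G_i$ then yields
\[
\Prob_\delta(x_i\in G_i\mid \boldsymbol{\sigma}^{(i)},G_i)=\frac{e^\delta\pi}{(e^\delta-1)\pi+1}.
\]
This is exactly the claimed formula, provided $\Prob(x_i\in G_i\mid\boldsymbol{\sigma}^{(i)})$ is read as $\pi$, the original green mass given the partition.

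The step that needs care is the conditioning. The statement conditions only on $\boldsymbol{\sigma}^{(i)}$, not on $G_i$, and under Assumption~\ref{ass:iid-green} the unconditional quantity $\Prob(x_i\in G_i\mid\boldsymbol{\sigma}^{(i)})$ averaged over $G_i$ equals $\gamma$. The map $\pi\mapsto e^\delta\pi/((e^\delta-1)\pi+1)$ is strictly concave for $\delta>0$. So averaging the identity over $G_i$ would give the identity only up to a Jensen gap, not exactly. I would therefore state the convention explicitly: $G_i$ is a function of the context $x_{<i}$ and the key, so it is measurable with respect to the same conditioning that determines $\boldsymbol{\sigma}^{(i)}$. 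This matches the paper's remark that the notation suppresses conditioning on $G_i$. With that convention the identity holds pointwise, and it is then the quantity that later sections average to obtain $\gamma'(\gamma,\delta)$.

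The remaining steps are routine. I would check that the denominator is positive, which holds because $(e^\delta-1)\pi+1\ge 1$. I would also note the sanity cases: $\delta=0$ recovers $\pi$, and $\pi\in\{0,1\}$ is left fixed. The main obstacle is the conditioning issue just described, namely making precise that the lemma is a statement conditional on the realized partition, not an averaged one.
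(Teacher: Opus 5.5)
Your proposal is correct and is essentially the paper's argument: a direct softmax computation of the green mass after the logit shift. The paper routes it through $A=\sum_{v\in G_i}e^{M_v(x_{<i})}$, $B=\sum_{v\notin G_i}e^{M_v(x_{<i})}$ and $R=B/A=(1-p)/p$, while you divide by the original normalizer, which amounts to the same computation. Your explicit requirement that $G_i$ be held fixed by the conditioning is more precise than the paper's last step, which only asserts that conditioning on $x_{<i}$ is equivalent to conditioning on $\boldsymbol{\sigma}^{(i)}$; your Jensen-gap remark also correctly explains why the identity cannot hold exactly after averaging over $G_i$.
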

\begin{proof}
See Appendix \ref{app:prob-alt-ori}.
\end{proof}

Lemma~\ref{lem:prob-alt-ori} provides a precise, token-level description of how watermarking biases generation toward the green list. Next, we obtain a sequence-level quantity by accounting for the distribution of the NTP vectors \(\boldsymbol{\sigma}^{(i)}\).

\paragraph{NTP Distribution}
A precise characterization of the NTP vector \(\boldsymbol{\sigma}^{(i)}\) induced by a large language model is generally intractable, due to its dependence on high-dimensional internal representations~\cite{farquhar2024detecting}. To enable a tractable yet principled analysis, we introduce a surrogate probabilistic model for \(\boldsymbol{\sigma}^{(i)}\). We begin by specifying a baseline probabilistic assumption on the NTP distribution.

\begin{assumption}[Non-informative NTP prior]
\label{ass:ntp-dirichlet}
At each position \(i\), the NTP vector
\(\boldsymbol{\sigma}^{(i)} \in \Delta^{|V|-1}\) is modeled as an independent draw from the uniform Dirichlet distribution,
\[
\boldsymbol{\sigma}^{(i)} \sim \mathrm{Dir}(1,\ldots,1),
\]
\end{assumption}

Assumption~\ref{ass:ntp-dirichlet} is a non-informative, model- and
text-agnostic prior for the NTP vector. It does not
use corpus-specific frequency or semantic information, matching the fair
comparison setting in which the baseline watermarking methods also do not use
such information. The framework is not tied to this prior; when reliable
structural information is available, richer priors such as mixtures of
Dirichlet~\cite{dalal2024blackboxstatisticalmodel} or Latent Dirichlet
allocation~\cite{blei2003latent} can be used. See
Appendix~\ref{app:gamma-prime} for a detailed discussion.

The approximation is effective because the detector depends on the green-list
mass \(Y_i=\sum_{v\in G_i}\sigma^{(i)}_v\), not the full NTP vector. Since the
keyed green list is sampled independently of the language model,
\(\mathbb E(Y_i\mid \sigma^{(i)})=\gamma\); with a large vocabulary,
\(Y_i\) concentrates near \(\gamma\), and sequence-level averaging further
reduces fluctuations. We then derive the
corresponding green-token probability under watermarking.

\begin{theorem}[Green-token probability under non-informative prior]
\label{thm:gamma-prime}
Under Assumption~\ref{ass:ntp-dirichlet}, the marginal probability of sampling a green-list token under watermarking satisfies
\begin{align*}
\gamma'
:&= \Prob_{\delta}(x \in G) \\
&= e^{\delta}\gamma\;
{}_2F_1\!\left(
1,\ \gamma|V|+1;\ |V|+1;\ -(e^{\delta}-1)
\right),
\end{align*}
where \({}_2F_1(\cdot)\) denotes the Gaussian hypergeometric function. Moreover, in the large-vocabulary regime,
\[
\gamma'
=
\frac{e^{\delta}\gamma}{1+\gamma(e^{\delta}-1)}
+ O(|V|^{-1/2}).
\]
\end{theorem}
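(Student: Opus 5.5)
We combine Lemma~\ref{lem:prob-alt-ori} with the Dirichlet aggregation property. Conditioned on \(\boldsymbol{\sigma}^{(i)}\) and the green list \(G_i\), the unwatermarked green probability is the green mass \(Y=\sum_{v\in G}\sigma_v\). Lemma~\ref{lem:prob-alt-ori} then gives
\[
\Prob_\delta(x\in G\mid \boldsymbol{\sigma}) = \frac{e^\delta Y}{1+(e^\delta-1)Y}.
\]
Hence \(\gamma' = \E\bigl[e^\delta Y/(1+(e^\delta-1)Y)\bigr]\), where the expectation is over both \(\boldsymbol{\sigma}\) and \(G\).

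The first step is to identify the law of \(Y\). By Assumption~\ref{ass:iid-green}, \(G\) is independent of \(\boldsymbol{\sigma}\) and has fixed size \(m=\gamma|V|\). Under \(\mathrm{Dir}(1,\dots,1)\), the aggregation property gives \(Y\sim\mathrm{Beta}(m,\,|V|-m)\) for every fixed \(G\), so the randomness of \(G\) drops out.

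The second step is to evaluate the resulting Beta integral. Write \(z=-(e^\delta-1)\), so that \(1+(e^\delta-1)y=1-zy\). Then
\[
\gamma' = \frac{e^\delta}{B(m,|V|-m)}\int_0^1 y^{m}(1-y)^{|V|-m-1}(1-zy)^{-1}\,dy.
\]
Euler's integral representation,
\[
{}_2F_1(a,b;c;z)=\frac{1}{B(b,c-b)}\int_0^1 t^{b-1}(1-t)^{c-b-1}(1-zt)^{-a}\,dt,
\]
applies with \(a=1\), \(b=m+1\), \(c=|V|+1\). The prefactor becomes \(B(m+1,|V|-m)/B(m,|V|-m)=m/|V|=\gamma\), which yields exactly \(e^\delta\gamma\,{}_2F_1(1,\gamma|V|+1;|V|+1;-(e^\delta-1))\). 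Convergence poses no issue because \(z<0\) and the integrand is bounded.

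The third step, the large-vocabulary asymptotic, I expect to be the main point requiring care, though it is elementary. Rather than using hypergeometric asymptotics, work directly with \(f(y)=e^\delta y/(1+(e^\delta-1)y)\) on \([0,1]\). For \(\delta\ge 0\) this function is smooth with bounded derivatives, and \(|f'|\le e^\delta\) since the denominator is at least \(1\). The Beta variable satisfies \(\E Y=\gamma\) and \(\Var Y=\gamma(1-\gamma)/(|V|+1)\).

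Applying a Lipschitz bound,
\[
|\E f(Y)-f(\gamma)|\le e^\delta\,\E|Y-\gamma|\le e^\delta\sqrt{\Var Y}=O(|V|^{-1/2}),
\]
with the constant depending on \(\delta\) and \(\gamma\), which are held fixed. This gives \(\gamma'=e^\delta\gamma/(1+\gamma(e^\delta-1))+O(|V|^{-1/2})\). A second-order Taylor expansion would in fact give the sharper \(O(|V|^{-1})\), since the linear term vanishes, but the stated rate is all that is needed.

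One subtlety to address is that \(\gamma|V|\) should be an integer. If it is not, one takes \(m=\lfloor\gamma|V|\rfloor\), and the resulting discrepancy in \(m/|V|\) is \(O(|V|^{-1})\), which is absorbed into the error term.
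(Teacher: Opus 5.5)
Your proposal is correct and follows essentially the same route as the paper. Both arguments use Dirichlet aggregation to get $Y\sim\mathrm{Beta}(\gamma|V|,(1-\gamma)|V|)$, apply Lemma~\ref{lem:prob-alt-ori} to write $\gamma'=\E[e^\delta Y/(1+(e^\delta-1)Y)]$, identify the resulting Beta integral as a Gauss hypergeometric function, and obtain the $O(|V|^{-1/2})$ rate from the $e^\delta$-Lipschitz bound with $\E|Y-\gamma|\le\sqrt{\Var Y}$. Your explicit Euler-integral step, with the prefactor $B(m+1,|V|-m)/B(m,|V|-m)=\gamma$, fills in what the paper only asserts, and your remark that a second-order expansion sharpens the error to $O(|V|^{-1})$ is also correct.
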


\begin{proof}
See Appendix~\ref{app:gamma-prime}.
\end{proof}

Theorem~\ref{thm:gamma-prime} plays a central role in the subsequent analysis. With this characterization in place, the token-wise hypothesis testing problem can now be transformed into
\[
H_0:\ \Prob(x_i \in G_i) = \gamma,
\qquad
H_1:\ \Prob_{\delta}(x_i \in G_i) = \gamma'.
\]
It replaces the originally implicit and model-dependent alternative hypothesis with an explicit, low-dimensional parameterization in terms of \(\gamma'\). In Section~\ref{sec:experiments}, we empirically verify that this approximation closely matches observed green-token rates across multiple language models and datasets, even for moderate vocabulary sizes.

\subsection{Sequence-Level Alternative}
We now consider the aggregate statistic $S_n$ under the alternative.
Unlike the null case, the sequence of indicators $I_t$ is no longer independent due to the autoregressive nature of the generation process.
\paragraph{Dependence Structure and Information Flow}

\begin{figure}[!htbp]
\centering
\begin{tikzpicture}[
    node distance=0.7cm,
    every node/.style={draw, rectangle, rounded corners, align=center},
    arrow/.style={->, line width=1pt},
    plus/.style={draw=none, inner sep=0pt}
]

\node (Ii) {$I_i$};
\node (Gi) [left=of Ii] {$G_i$};

\node (xhist) [below=of Gi] {$x_{1:i-1}$};

\node (plusnode) [right=0.15cm of xhist, plus] {$+$};
\node (xi)       [right=0.15cm of plusnode] {$x_i$};

\node (hi)  [right=of xi] {$\text{Model State}_i$};
\node (sig) [right=of hi] {$\sigma_{i+1}$};
\node (xi1) [right=of sig] {$x_{i+1}$};

\node (Ii1) [above=of xi1] {$I_{i+1}$};
\node (Gi1) [left=of Ii1] {$G_{i+1}$};

\draw[arrow] (xi) -- (hi);
\draw[arrow] (hi) -- (sig);
\draw[arrow] (sig) -- (xi1);

\draw[arrow] (Gi)  -- (Ii);
\draw[arrow] (Gi1) -- (Ii1);

\draw[arrow] (xi)  -- (Ii);
\draw[arrow] (xi1) -- (Ii1);

\draw[arrow, bend left=20] (Gi.east) to (xi.north west);

\draw[arrow, bend left=25] (Gi1.east) to (xi1.north west);
\end{tikzpicture}
\caption{Information flow between tokens, model state, and green lists.}
\label{fig:Info_flow}
\end{figure}
We investigate the extent to which the future indicators
$X_t^+ = (I_i)_{i \ge t}$
depend on the past indicators  $X_t^- = (I_i)_{i < t}$.
We first dissect the dependence pathways. As illustrated in Figure~\ref{fig:Info_flow}, the realization of an indicator $I_t$ depends on two factors: the generated token $x_t$ and the green list partition $G_t$. Crucially, the green list $G_t$ is essentially unobservable to the model. Thus, $G_t$ is statistically orthogonal to the model. The only pathway for dependence between indicators is the semantic trajectory through correlation with tokens:
\[I_s \leftarrow x_s \rightarrow \text{Model State} \rightarrow x_t \rightarrow I_t, \quad \text{for } s < t.\]

Since the language model is trained to optimize semantic coherence rather than preserve arbitrary historical noise, information about past specific realizations of $I_i$ is compressed and overwritten. We model this ``forgetting'' mechanism using decay in mutual information:
\[
\mathcal I(X;Y)
=
H(X) - H(X\mid Y).
\]
where \(H(X)\) denotes the entropy of the random variable $X$. Since the model parameters remain fixed during generation, the governing mechanism does not change across steps. The rate of decay should therefore be constant. The generation process of $G_i$ is also invariant across time. This motivates the following assumption.

\begin{assumption}[Information Decay]\label{ass:MI}

We assume that there exist constants $C > 0$ and a decay rate $0 < \rho < 1$ such that for any $s < t$:
\[
\mathcal I(X_t^+; X_s^-) \leq C \rho^{t-s}.
\]
Also $\{I_t\}$ is assumed to be stationary.
\end{assumption}

To establish the theoretical results, we further preclude some pathological cases where negative correlations perfectly cancel out the external randomness of $G_t$ in the long run.

\begin{assumption}[Non-degenerate Long-run Variance]\label{ass:NG}
The variance of the indicator sequence is strictly positive asymptotically:
\[
\sigma^2 \coloneqq \lim_{n\to \infty} \frac{1}{n} \Var\left(\sum_{i=1}^n I_i\right) > 0.
\]
\end{assumption}

Next, we are ready to establish the asymptotic distribution of the aggregate statistics under the alternative:


\begin{theorem}
\label{thm:alt_norm}
Under Assumptions~\ref{ass:MI} and~\ref{ass:NG}, suppose that under the alternative \(\E I_t=\gamma'\), where \(0<\gamma'<1\). Then, for some constant \(c>0\),
\[
\sqrt{n}\left(\frac{S_n}{n}-\gamma'\right)
\xrightarrow{d}
\mathcal{N}\!\left(0,c\gamma'(1-\gamma')\right).
\]
\end{theorem}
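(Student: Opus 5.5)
The plan is to derive the result from a central limit theorem for stationary weakly dependent sequences, with the mixing condition supplied by Assumption~\ref{ass:MI}. Throughout, the indicators $I_t$ are bounded (values in $\{0,1\}$), and Assumption~\ref{ass:MI} makes them stationary with mean $\gamma'$.

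\textbf{Step 1: from mutual information to mixing coefficients.} For any events $A\in\sigma(X_s^-)$ and $B\in\sigma(X_t^+)$, Pinsker's inequality gives
\[
|\Prob(A\cap B)-\Prob(A)\Prob(B)| \le \mathrm{TV}\bigl(P_{X_s^-,X_t^+},\,P_{X_s^-}\otimes P_{X_t^+}\bigr) \le \sqrt{\tfrac12\,\mathcal I(X_t^+;X_s^-)}.
\]
Hence the strong mixing coefficient satisfies
\[
\alpha(k)\le \beta(k)\le \sqrt{C/2}\;\rho^{k/2},
\]
so the sequence is geometrically $\beta$-mixing, and in particular $\alpha$-mixing. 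The mutual information bounds the total variation between the joint law and the product of the marginals, so this step is clean.

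\textbf{Step 2: summable covariances and the long-run variance.} For bounded variables, Davydov's covariance inequality gives $|\mathrm{Cov}(I_0,I_k)|\le 4\alpha(k)$. This bound is geometrically summable, so
\[
\sigma^2=\gamma'(1-\gamma')+2\sum_{k\ge1}\mathrm{Cov}(I_0,I_k)
\]
converges absolutely. It also equals $\lim_n n^{-1}\Var(S_n)$, by stationarity and a Ces\`aro argument. Assumption~\ref{ass:NG} gives $\sigma^2>0$.

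\textbf{Step 3: the CLT.} I will invoke the CLT for stationary strongly mixing sequences (Ibragimov; or Rosenblatt's version for bounded variables). Its hypotheses are boundedness, which makes any moment condition trivial, together with $\sum_k\alpha(k)^{\delta/(2+\delta)}<\infty$, which holds by geometric decay, and $\sigma^2>0$. It yields
\[
\sqrt n\,(S_n/n-\gamma')\xrightarrow{d}\mathcal N(0,\sigma^2).
\]
Alternatively, the conclusion can be proved directly with Bernstein's big-block/small-block argument: blocks of length $p_n$ are separated by gaps of length $q_n$, with $q_n/p_n\to0$ and $n\alpha(q_n)/p_n\to0$. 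The gaps are negligible in $L^2$, the big blocks are nearly independent by mixing, and Lindeberg's condition holds because the variables are bounded.

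\textbf{Step 4: identifying $c$.} Finally, set $c:=\sigma^2/(\gamma'(1-\gamma'))$, which is well defined since $0<\gamma'<1$. It is positive by Assumption~\ref{ass:NG}, and it is finite by Step 2, which bounds it by $1+8\sum_k\alpha(k)/(\gamma'(1-\gamma'))$.

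The main obstacle is Step 1. Assumption~\ref{ass:MI} is phrased in terms of $s<t$ with $X_s^-$ consisting of indicators before $s$. One must check that this indeed controls dependence between the past up to time $s$ and the future from time $t$ on, uniformly in $s$, which stationarity provides. One must also pass from mutual information to a uniform bound over sigma-field events; that the bound is over events rather than the variables themselves is exactly what Pinsker's inequality delivers. Once geometric mixing is established, the remaining steps are standard.
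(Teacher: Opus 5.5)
Your proposal is correct and follows essentially the same route as the paper. The paper converts the mutual-information decay of Assumption~\ref{ass:MI} into geometric $\alpha$-mixing via Bradley's bound $2\alpha(\mathcal U,\mathcal V)\le[\mathcal I(\mathcal U;\mathcal V)]^{1/2}$, which is the same Pinsker-type fact you derive directly. It then invokes the CLT for stationary strongly mixing sequences with $\sum_n\alpha(n)^{\delta/(2+\delta)}<\infty$ and Assumption~\ref{ass:NG}, and sets $c=\sigma^2/(\gamma'(1-\gamma'))$, exactly as you do. Your Step 2, which uses Davydov's inequality to identify $\gamma'(1-\gamma')+2\sum_{k\ge1}\mathrm{Cov}(I_0,I_k)$ with $\lim_n n^{-1}\Var(S_n)$, just makes explicit a point the paper leaves implicit in the cited CLT.
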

\begin{proof}
See Appendix \ref{app:alt_norm}.
\end{proof}
For the level-\(\alpha\) test above, define the detection power as
\[
\pi^*(\gamma,\delta):=\Prob_\delta(\mathrm{reject}\ H_0).
\]
Combining the central-limit approximation under \(H_0\), the rejection threshold under \(H_0\), and the asymptotic distribution under \(H_1\) in Theorem~\ref{thm:alt_norm} yields the following normal-approximation power:
\begin{equation}
\label{eq:power}
\pi^*(\gamma,\delta)
\;\approx\;
\Phi\!\left(
\frac{\sqrt{n}(\gamma'-\gamma)-z_{1-\alpha}\sqrt{\gamma(1-\gamma)}}
{\sqrt{c\gamma'(1-\gamma')}}
\right).
\end{equation}

\paragraph{Remark.}
The power depends not only on the green-token ratio gap \(\gamma' - \gamma\), but also on the values of \(\gamma\) and \(\gamma'\). This demonstrates that differences in green-token counts or probabilities alone are insufficient for measuring power. The constant \(c>0\) captures long-run variance inflation caused by dependence in generated text. It changes the numerical value of the predicted power, but not the parameter that maximizes the normal-approximation objective. To see this, write the objective in the form
\[
\Phi\!\left(\frac{f(\gamma,\delta)}{\sqrt c}\right),
\]
where \(f\) is independent of \(c\). Since \(\Phi\) is monotone and \(1/\sqrt c\) is a positive constant, maximizing this expression is equivalent to maximizing \(f(\gamma,\delta)\). Thus the parameter selection is independent of \(c\). Appendix~\ref{app:c-est} discusses how \(c\) can be estimated in practice.

\subsection{Distortion Metrics}
\paragraph{KL Divergence}
The other essential part of our framework is the distortion metric. As stated in \citet{cai2024betterstatisticalunderstandingwatermarking}, the total KL divergence of the sequence is equal to the sum of token-wise KL divergences.
The watermark-induced distortion is therefore measured using the expected token-wise KL divergence $D_{\mathrm{KL}}(\gamma,\delta)$, with direction from the watermarked distribution to the original distribution.
\begin{lemma}[Plug-in Formula for KL Divergence]
\label{lem:KL}
Under the large-vocabulary plug-in approximation \(P(x\in G)\approx\gamma\), let
\[
\gamma'
=
\frac{e^{\delta}\gamma}{1+\gamma(e^{\delta}-1)}
\]
be the corresponding approximation to the effective green-token probability. Then
\[
D_{\mathrm{KL}}(\gamma,\delta) = \delta \gamma'(\gamma,\delta)- \log\!\left(1+\gamma(e^\delta-1)\right), \label{eq:KL}
\]
Given $\gamma$, $D_{\mathrm{KL}}(\gamma,\delta)$ is strictly increasing in $\delta$ for all $\delta>0$.
\[
\frac{\partial}{\partial \delta}D_{\mathrm{KL}}(\gamma,\delta) > 0.
\]
As $\delta$ varies and $\gamma$ fixed, $D_{\mathrm{KL}}(\gamma,\delta)$ is upper bounded by the choice of $\gamma$
\[
\sup_\delta D_{\mathrm{KL}}(\gamma,\delta) = -\log\gamma.
\]
As $\gamma$ varies and $\delta$ fixed, $D_{\mathrm{KL}}(\gamma,\delta)$ attains maximum at
\[
\gamma_0(\delta)
=
\frac{\delta e^\delta-(e^\delta-1)}{(e^\delta-1)^2}.
\]
\end{lemma}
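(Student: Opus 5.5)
The plan is to first derive an exact per-step KL formula from the exponential-tilt form of the KGW distribution, then substitute the plug-in $P(x\in G)\approx\gamma$, and finally obtain the three monotonicity/extremum claims by direct differentiation in $\delta$ and $\gamma$. Throughout I take $0<\gamma<1$ and $\delta>0$.

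\textbf{Exact per-step formula.} By construction of the modified logits, $q_i(k)=p_i(k)e^{\delta\mathbf 1\{k\in G_i\}}/Z_i$. Here $Z_i=1+(e^\delta-1)P_i(G_i)$, consistent with the computation behind Lemma~\ref{lem:prob-alt-ori}. Hence $\log\bigl(q_i(k)/p_i(k)\bigr)=\delta\mathbf 1\{k\in G_i\}-\log Z_i$. Taking expectation under $Q_i$ gives
\[
D_{\mathrm{KL}}(Q_i\|P_i)=\delta\,Q_i(G_i)-\log\bigl(1+(e^\delta-1)P_i(G_i)\bigr).
\]
Substituting $P_i(G_i)\approx\gamma$ turns $Q_i(G_i)$ into $e^\delta\gamma/(1+\gamma(e^\delta-1))=\gamma'$, which is exactly the large-$|V|$ limit in Theorem~\ref{thm:gamma-prime}. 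This yields the stated formula.

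\textbf{Monotonicity in $\delta$ and the supremum.} Differentiating the closed form of $\gamma'$ gives $\partial_\delta\gamma'=\gamma'(1-\gamma')$. Also $\partial_\delta\log(1+\gamma(e^\delta-1))=\gamma e^\delta/(1+\gamma(e^\delta-1))=\gamma'$. Therefore
\[
\partial_\delta D_{\mathrm{KL}}=\gamma'+\delta\gamma'(1-\gamma')-\gamma'=\delta\gamma'(1-\gamma')>0 .
\]
Since the function is increasing, its supremum over $\delta$ is the limit as $\delta\to\infty$. To evaluate it, write
\[
D_{\mathrm{KL}}=-\delta(1-\gamma')-\log\bigl(\gamma+(1-\gamma)e^{-\delta}\bigr).
\]
Now $1-\gamma'=(1-\gamma)/(1+\gamma(e^\delta-1))=O(e^{-\delta})$, so $\delta(1-\gamma')\to0$. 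The logarithm tends to $\log\gamma$, so the supremum is $-\log\gamma$.

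\textbf{Maximizer in $\gamma$.} Set $A=e^\delta-1$ and $Z=1+\gamma A$. Then $\gamma'=(A+1)\gamma/Z$ and $\partial_\gamma\gamma'=(A+1)/Z^2$. This gives
\[
\partial_\gamma D_{\mathrm{KL}}=\frac{\delta e^\delta}{Z^2}-\frac{A}{Z}=\frac{\delta e^\delta-AZ}{Z^2}.
\]
The numerator $\delta e^\delta-A(1+\gamma A)$ is strictly decreasing and affine in $\gamma$. So the derivative changes sign exactly once, from positive to negative, at $\gamma_0(\delta)=(\delta e^\delta-A)/A^2$. That point is therefore the unique maximizer, provided it lies in $(0,1)$.

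\textbf{Checking $\gamma_0\in(0,1)$.} This is the only step needing care beyond routine calculus.
\begin{itemize}
\item Positivity: it is equivalent to $\delta e^\delta>e^\delta-1$, i.e.\ $g(\delta)=e^\delta(\delta-1)+1>0$. This holds because $g(0)=0$ and $g'(\delta)=\delta e^\delta>0$.
\item $\gamma_0<1$: it is equivalent to $\delta e^\delta-A<A^2$, i.e.\ $\delta e^\delta<A(A+1)=Ae^\delta$, i.e.\ $\delta<e^\delta-1$, which is true for $\delta>0$.
\end{itemize}
If $\gamma_0$ fell outside $(0,1)$ the maximum would sit at a boundary instead, so this check is what makes the interior maximizer valid.
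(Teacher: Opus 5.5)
Your proof is correct, and it differs from the paper's in how the closed form is obtained. It is also more complete: it covers two claims that the paper's appendix leaves unproved.

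\textbf{Closed form.} The paper works only with the binary indicator $I=\mathbf 1\{x\in G\}$. It writes $\mathrm{KL}(\mathrm{Bern}(\gamma')\,\|\,\mathrm{Bern}(\gamma))$, substitutes $\gamma'$, and elsewhere describes this quantity as a low-dimensional surrogate for the full-vocabulary divergence. You instead use the exponential-tilt structure $q_i(k)=p_i(k)e^{\delta\mathbf 1\{k\in G_i\}}/Z_i$. This gives the exact full-vocabulary per-step divergence $\delta\,Q_i(G_i)-\log(1+(e^\delta-1)P_i(G_i))$, and you apply the plug-in only afterwards.

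This buys something the paper's route does not show. The likelihood ratio $q_i/p_i$ depends on $k$ only through $\mathbf 1\{k\in G_i\}$, so the indicator is sufficient and the Bernoulli KL equals the full KL. The only approximation is therefore $P_i(G_i)\approx\gamma$. Data processing alone would give the Bernoulli KL only as a lower bound. This also ties the formula directly to the full-vocabulary KL measured in the experiments.

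\textbf{Derivatives.} Your derivative computations are the same as the paper's. Your $\delta\gamma'(1-\gamma')$ equals the paper's $\delta\gamma e^{\delta}(1-\gamma)/(1+\gamma(e^\delta-1))^2$. Your numerator $\delta e^\delta-A(1+\gamma A)$ is the paper's $N(\gamma)$.

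\textbf{Claims the paper leaves unproved.} The paper never proves $\sup_\delta D_{\mathrm{KL}}=-\log\gamma$, and never checks that $\gamma_0(\delta)\in(0,1)$. Your rewriting $D_{\mathrm{KL}}=-\delta(1-\gamma')-\log(\gamma+(1-\gamma)e^{-\delta})$, together with $1-\gamma'=O(e^{-\delta})$, settles the first. Your inequalities $e^\delta(\delta-1)+1>0$ and $\delta<e^\delta-1$ settle the second. That check is what makes ``attains its maximum at $\gamma_0$'' a statement about an interior point of the admissible range.
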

\begin{proof}
See Appendix \ref{app:KL}.
\end{proof}

Lemma~\ref{lem:KL} implies that for any fixed \(\gamma\), the strict monotonicity of \(D_{\mathrm{KL}}(\gamma,\delta)\) in \(\delta\) establishes a one-to-one correspondence between the logit shift \(\delta\) and the induced distortion level. This allows watermark strength to be parameterized directly in terms of an explicit distortion budget rather than an abstract model parameter. As a consequence, watermark design under distortion constraints can be formulated as a principled optimization problem, where \(\delta\) is determined implicitly by the target KL divergence.

\section{Practical Guidance}
\label{sec:guidance}
We now translate our theoretical analysis into concrete guidance for practical watermark parameter selection. In contrast to prior work, which treats the watermark parameters \((\gamma,\delta)\) as unconstrained hyperparameters tuned heuristically on downstream data, our framework casts watermark design as a theoretically informed optimization problem to balance \emph{detectability} and \emph{distortion}.

\paragraph{From Hyperparameter Tuning to Optimization}
Our results provide explicit quantitative characterizations of the two competing objectives in watermark design:
\begin{itemize}
    \item \textbf{Detectability}: captured by the asymptotic power function \(\pi^*(\gamma,\delta)\) in Eq. (\ref{eq:power}),
    \item \textbf{Distortion}: measured by the expected token-wise KL divergence \(D_{\mathrm{KL}}(\gamma,\delta)\) in Eq. (\ref{eq:KL}).
\end{itemize}
A key consequence of Lemma~\ref{lem:KL} is that, for any fixed \(\gamma\), the KL divergence is strictly increasing in \(\delta\) and admits a closed-form expression. This establishes a one-to-one correspondence between the logit shift \(\delta\) and the induced distortion level \(K\), allowing watermark strength to be parameterized directly in terms of distortion rather than an abstract model parameter. As a result, the original two-dimensional hyperparameter tuning problem over \((\gamma,\delta)\) is reduced to a one-dimensional numerical optimization problem. For instance, given a distortion budget \(K_0\), watermark design can be formulated as
\begin{align}
\label{eq:gamma}
\max_{\gamma}\ \pi^*\!\left(\gamma,\delta(\gamma,K_0)\right).
\end{align}
where \(\delta(\gamma,K_0)\) is obtained by solving \(D_{\mathrm{KL}}(\gamma,\delta)=K_0\). This optimization is solved once before generation, not separately at each token: both \(\pi^*(\gamma,\delta)\) and \(D_{\mathrm{KL}}(\gamma,\delta)\) are expectation-level functions of the watermark parameters, rather than functions of a realized token or context. Once the calibrated pair \((\gamma^*,\delta^*)\) is selected, the same parameters are applied uniformly throughout the generated sequence. Analogously, one may minimize distortion subject to a target detectability constraint. In both cases, parameter selection is guided by the theory, and we later show that the resulting optimization is efficient and stable in practice.

\paragraph{Initializing the Green-List Rate \(\gamma\)}
For solving problem (\ref{eq:gamma}), what remains is to provide principled guidance on how to initialize the numerical search on $\gamma$.

We begin by noting a structural property of the distortion constraint. For a fixed watermark strength parameter \(\delta_0 > 0\), the function \(D_{\mathrm{KL}}(\gamma,\delta_0)\) is unimodal in \(\gamma\): it increases for small \(\gamma\), attains a maximum at an intermediate value, and then decreases as \(\gamma\) approaches one. Consequently, for a fixed distortion budget \(K_0\),
\begin{equation}
\label{eq:kl_constraint}
D_{\mathrm{KL}}(\gamma,\delta_0)=K_0.
\end{equation}
admits at most two solutions, denoted by \(\gamma_\ell < \gamma_h\), whenever a solution exists.
\begin{figure}[!htbp]
    \centering
    \includegraphics[width=\linewidth]{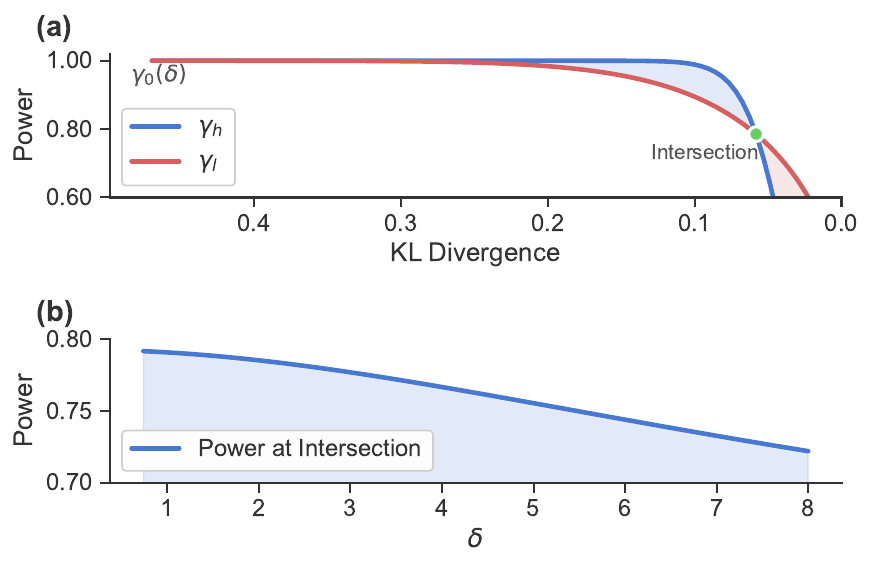}
    \caption{Figure (a) illustrates the power curves of the two solutions $\gamma_l,\gamma_h$ of Eq.~\ref{eq:kl_constraint}, calculated with $\alpha=0.05$, $\delta=2$, and $n=50$. The two curves intersect at one point. At power levels above the intersection, $\gamma_h$ achieves equivalent power with lower $D_{\mathrm{KL}}$.
    Figure (b) extends the calculation to a wide range of $\delta\in(0.5,8)$ by plotting the power at the corresponding intersection point. Each $\delta$ corresponds to one intersection point. We observe a monotonically decreasing pattern, with maximum power below 0.8.
}
    \label{fig:root}
\end{figure}
To determine which solution is preferable, we examine the corresponding detection power.

Figure~\ref{fig:root}(a) reports the statistical power achieved by the two solutions \(\gamma_\ell\) and \(\gamma_h\) as the distortion budget \(K_0\) varies. At higher power levels, the higher solution \(\gamma_h\) consistently dominates, yielding a strictly lower distortion level with equivalent power. Figure~\ref{fig:root}(b) shows that the power at the intersection point where the two solutions coincide is uniformly below \(0.8\) for all tested values of \(\delta\). This implies that for practically relevant target power levels (e.g., \(0.8\) or higher), the higher solution \(\gamma_h\) is inherently optimal, requiring substantially less distortion to reach the target detectability. We verify this fact with experiments showing that values of $\gamma$ below $\gamma_0$ are non-optimal; see Appendix~\ref{app:dem-g0}.

Further notice that under the normal approximation, requiring the detector to outperform random guessing (power \(>0.5\)) imposes an explicit upper bound \(\gamma \le \gamma^*\) (see Appendix~\ref{app:gamma-star} for details). Moreover, with $\gamma_h>\gamma_0$, $\gamma_h$ lies in a region where $D_{\mathrm{KL}}$ is monotone decreasing with respect to $\gamma$. Therefore, for target power levels above \(0.8\), the optimal solution is guaranteed to lie just below the bound $\gamma^*$.

Accordingly, we initialize the numerical search for \(\gamma\) in a neighborhood immediately below \(\gamma^*\) and search downward. This strategy reliably recovers the larger root \(\gamma_h\), which is optimal for practically relevant power targets.

\section{Experiments}

\label{sec:experiments}
\begin{figure}[!htbp]
    \centering
    \includegraphics[width=\linewidth]{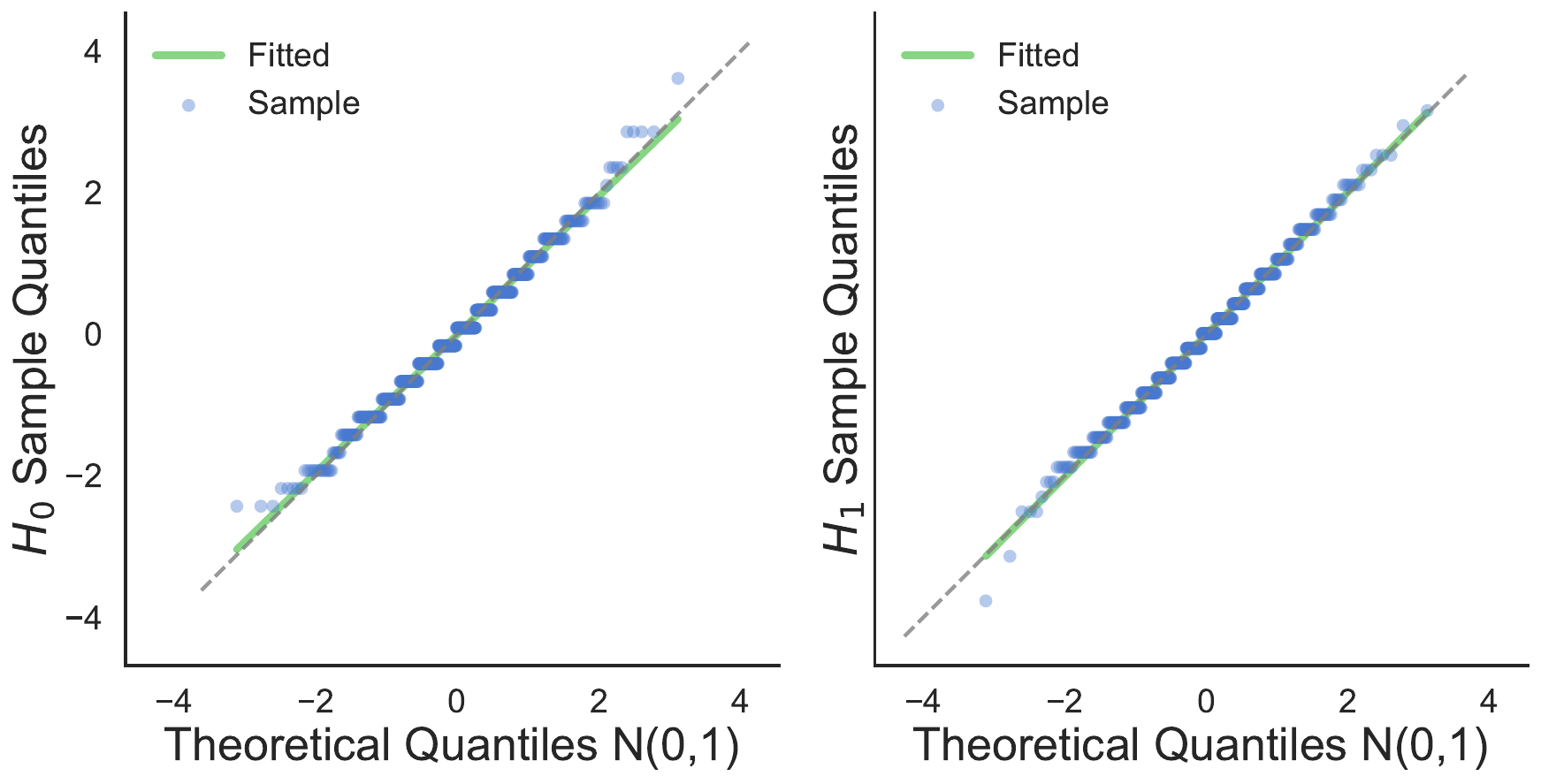}
    \caption{
Q--Q plots of the standardized green-token count statistic under both hypotheses.
The empirical quantiles of the statistic are compared with the theoretical quantiles of a standard normal distribution.
Left: $H_0$, corresponding to unwatermarked text.
Right: $H_1$, corresponding to watermarked text.}
\label{fig:qq_h0}
\end{figure}

\subsection{Experimental Setup}
Our experimental protocol largely follows prior watermarking evaluations~\cite{kirchenbauer2024reliabilitywatermarkslargelanguage}. 
The implementation is available at
  \url{https://github.com/shooof/wm}. We evaluate across multiple pretrained language models, including OPT~\cite{zhang2022optopenpretrainedtransformer}, Pythia~\cite{biderman2023pythiasuiteanalyzinglarge}, and GPT-2~\cite{radford2019language}. To assess robustness across domains, we consider text from the Colossal Clean Crawled Corpus (C4)~\cite{raffel2023exploringlimitstransferlearning}, Long-Form Question Answering (LFQA)~\cite{xu2023criticalevaluationevaluationslongform}, and Wikipedia~\cite{wikidump}. All models and datasets are obtained from Hugging Face. For each sample, we use the first 50 tokens as a prompt and generate \(n=50\) tokens unless stated otherwise. Detection is performed on these short generations to avoid trivial detectability gains from signal accumulation in long contexts and to better expose differences in statistical efficiency across methods~\cite{kirchenbauer2024reliabilitywatermarkslargelanguage}.

We compare our method against the framework~\cite{cai2024betterstatisticalunderstandingwatermarking} (DP in our figures), OPT watermarking~\cite{pmlr-v235-wouters24a}, and a dense grid-search baseline representing heuristic hyperparameter tuning. All experiments are conducted using the codebase of~\citet{kirchenbauer2024reliabilitywatermarkslargelanguage}, with additional implementation of KL divergence computation. For the search, DP, and our proposed method, identical execution pipelines are used to control for implementation effects, ensuring that performance differences arise solely from parameter selection. Detectability is evaluated using the true positive rate (TPR) at a fixed significance level \(\alpha=0.05\), based on a normalized \(z\)-score detector with threshold \(z_{1-\alpha}\). Semantic distortion is quantified using the per-token KL divergence from the watermarked next-token distribution to the unwatermarked next-token distribution (computed from logits), averaged over the generated sequence. Details on hyperparameter ranges and additional implementation specifics are provided in Appendix~\ref{app:experiment}.
\subsection{Distributional Verification}

We now empirically validate the distributional assumptions underlying our theoretical analysis.

In particular, Lemma \ref{lem:null-bernoulli} and Theorem \ref{thm:alt_norm} establish that, under mild conditions, the standardized green-token count converges to a standard normal distribution under both the null and alternative hypotheses.
\begin{figure}[!htbp]
    \centering
    \includegraphics[width=\linewidth]{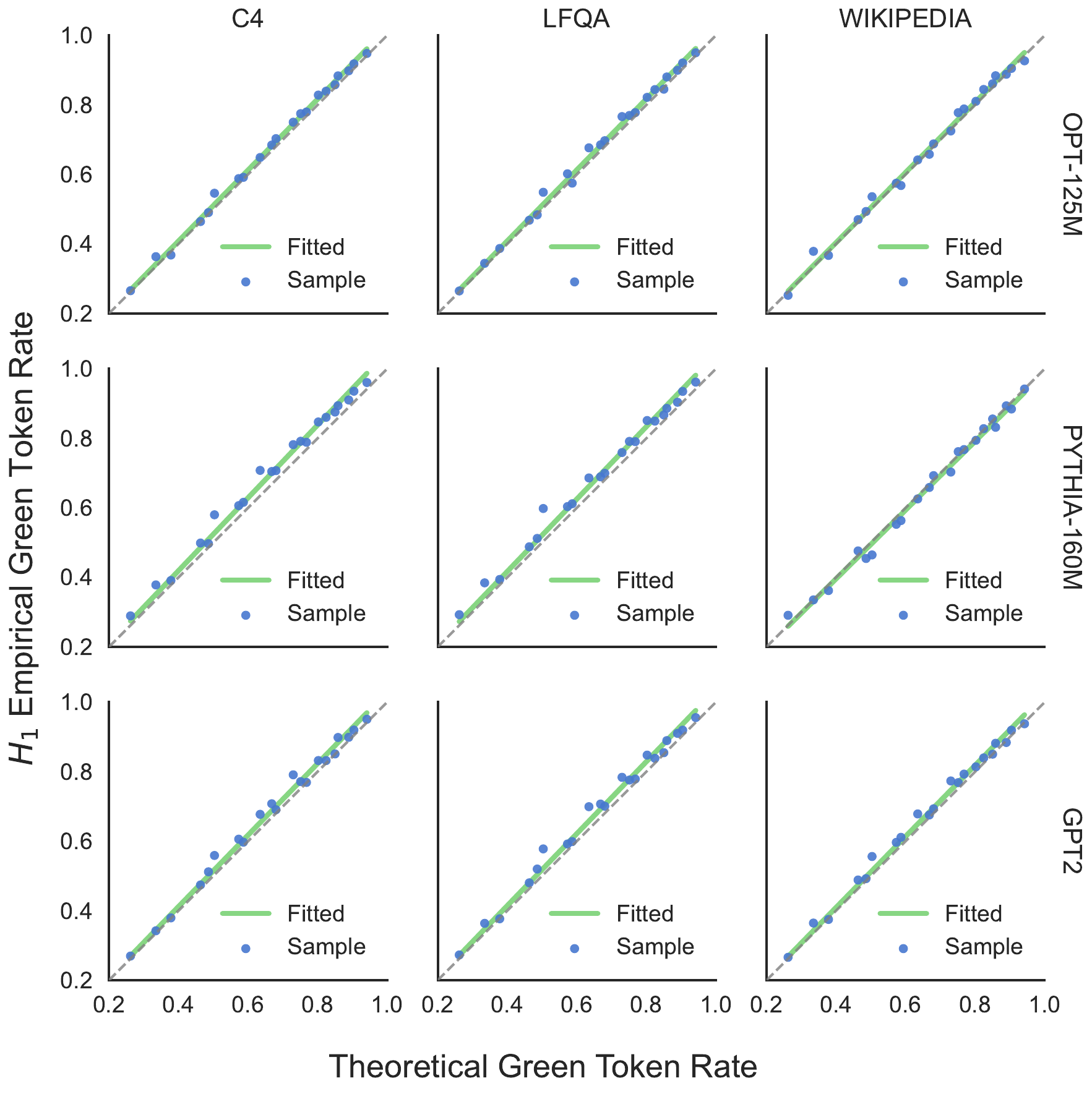}
    \caption{Comparison between theoretical and empirical green-token rates. The fitted model is $y = kx$, tested across the C4, LFQA, and Wikipedia datasets.}
    \label{fig:alt}
\end{figure}

\begin{figure}[!htbp]
    \centering
    \includegraphics[width=\linewidth]{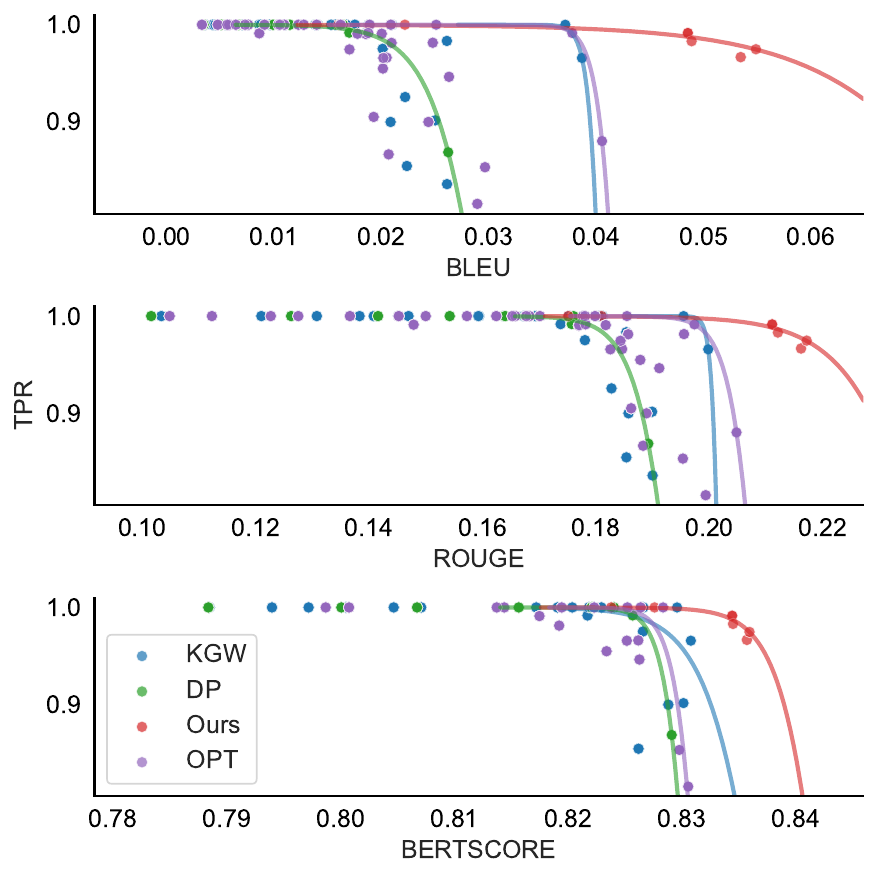}
    \caption{Statistical power (measured by TPR) versus semantic metrics, including BLEU, ROUGE, and BERTScore. Higher values indicate better quality for all metrics. Results are obtained on the LFQA dataset. For each method, the curve is fitted using its Pareto frontier with a small tolerance allowed for visualization quality. The fitted curve follows the form $y = 1/(1 + e^{ax - b})$.}
    \label{fig:bleu}
\end{figure}
Figure~\ref{fig:qq_h0} depicts Q--Q plots of the standardized green-token count statistic. Under both \(H_0\) (unwatermarked text) and \(H_1\) (watermarked text), the empirical quantiles closely track the theoretical quantiles of a standard normal distribution. This agreement holds across the distribution, with only minor tail deviations attributable to finite-sample effects.

These results provide empirical support for the normal approximation used in our detectability analysis and justify the use of the \(z\)-score detector throughout our experiments. In particular, they confirm that the power expressions derived in Section~\ref{sec:theory} accurately capture the behavior of the test statistic in practical regimes.

\subsection{Verification of the Alternative Hypothesis Characterization}

Next, we evaluate the accuracy of our theoretical characterization of the alternative hypothesis. Theorem \ref{thm:alt_norm} provides an explicit mapping between the watermark parameters \((\gamma,\delta)\) and the marginal green-token probability \(\gamma'\).

Figure~\ref{fig:alt} compares the theoretical green-token rate predicted by Theorem~\ref{thm:alt_norm} with the empirically observed rate across a grid of watermark parameters, language models, and datasets. Across all settings, the empirical measurements exhibit a near-linear relationship with the theoretical predictions, with coefficients of determination exceeding \(R^2 \geq 0.98\) in all cases. This close alignment confirms that the simplified modeling assumptions introduced in Section~\ref{sec:theory} used to derive \(\gamma'\) are sufficient to accurately describe the watermarking mechanism in practice.

A complete table of $R^2$ values for the linear fits of green-token rate and $D_{\mathrm{KL}}$ is included in Appendix~\ref{app:r2}.

\begin{figure}[!htbp]
    \centering
    \includegraphics[width=0.92\linewidth]{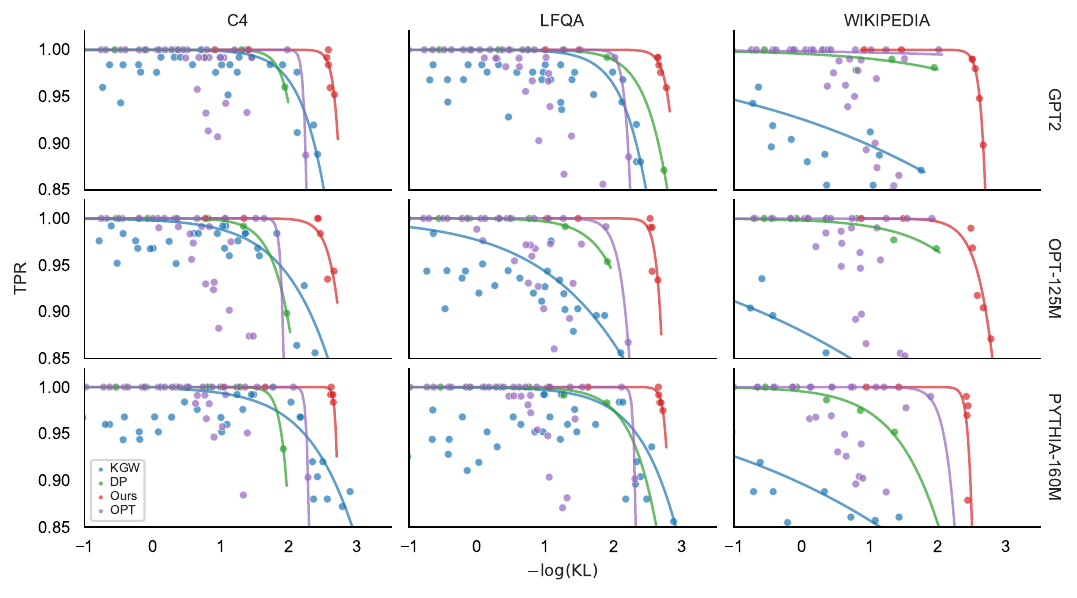}
    \caption{Statistical power (measured by TPR) versus semantic distortion (measured by KL divergence). Results are obtained on all three datasets using GPT-2, OPT-125M, and Pythia-160M. For each method, the curve is fitted using its Pareto frontier with a small tolerance allowed for visualization quality. The fitted curve follows the form $y = 1/(1 + e^{ax - b})$.}
    \label{fig:pareto}
\end{figure}
\subsection{Performance}
\label{sec:performance}
We evaluate the empirical detectability-distortion trade-off of different watermarking strategies by measuring statistical power (TPR) versus semantic distortion (token-wise KL divergence). We use $-\log(D_{\mathrm{KL}})$ for better visualization across our experiments. Figure~\ref{fig:pareto} reports results on all three datasets (C4, LFQA, and Wikipedia) using GPT-2, OPT-125M, and Pythia-160M. To assess whether the observed pattern persists for a larger model, Figure~\ref{fig:pareto_gemma} in the appendix reports the same evaluation on Gemma-2 9B.

Across these models, our method consistently lies on the Pareto frontier, achieving high detection power at substantially lower distortion compared to baseline methods. In particular, for moderate distortion regimes (larger $-\log(D_{\mathrm{KL}})$), our approach maintains near-saturated TPR, while KGW and DP baselines exhibit an earlier and steeper degradation in power. This behavior is especially pronounced for OPT-125M and Pythia-160M, where heuristic baselines incur a sharp drop in TPR once distortion is reduced beyond a model-dependent threshold.

Notably, the fitted Pareto curves reveal a systematic separation between methods: OPT-based and heuristic strategies operate in suboptimal regions of the distortion-detectability space, often requiring significantly higher KL divergence to achieve comparable power. In contrast, our method exhibits a more favorable slope, indicating improved statistical efficiency in converting distortion budget into detectability gains. This aligns with our theoretical analysis, which predicts that principled parameter selection based on explicit power-distortion relationships yields superior operating points.

Overall, these results demonstrate that the proposed framework not only improves peak detectability, but more importantly, achieves robust detection performance under tight distortion constraints. This validates the practical effectiveness of replacing heuristic hyperparameter tuning with theoretically informed optimization, as developed in Sections~\ref{sec:theory} and~\ref{sec:guidance}.

\paragraph{Evaluation Under Alternative Quality Metrics.}
Optimizing watermark parameters with respect to a single quality metric may bias the solution toward specific surface-level properties of the generated text. To assess the robustness of our approach, we therefore evaluate detectability-distortion trade-offs under multiple complementary semantic quality metrics, including BLEU~\cite{papineni2002bleu}, ROUGE~\cite{lin-2004-rouge}, and BERTScore~\cite{zhang2020bertscoreevaluatingtextgeneration}.

Figure~\ref{fig:bleu} depicts a representative comparison of statistical power (TPR) versus text quality measured by these metrics on the LFQA dataset. The complete panels across all three datasets and models, together with regression summaries relating KL distortion to sequence-level quality, are provided in Appendix~\ref{app:quality-metrics-full} and Figure~\ref{fig:quality_metrics_full}. Across all metrics, we observe a consistent qualitative pattern: our method maintains near-saturated detection power over a substantially wider range of quality values compared to baseline approaches, while heuristic methods exhibit a sharper degradation in TPR once quality constraints become stringent.

Although BLEU, ROUGE, and BERTScore emphasize different aspects of text quality, the relative ordering of watermarking methods remains stable. In particular, KGW and DP baselines tend to sacrifice semantic fidelity more aggressively to achieve high power, leading to early collapses in TPR when quality thresholds are tightened. In contrast, our method exhibits a smoother Pareto frontier across all metrics, indicating that the optimized parameterization induces a more uniform and controlled distributional shift.

This consistency across metrics suggests that the advantage of our approach is not tied to a specific notion of text similarity, but rather reflects a fundamentally more efficient use of distortion budget. By explicitly optimizing watermark parameters through a statistically grounded framework, the induced perturbation aligns more closely with the intrinsic geometry of the model’s output distribution, resulting in robust detectability without disproportionate degradation under either surface-level or semantic evaluation criteria.

Overall, these results confirm that the proposed watermarking strategy achieves genuinely low distortion while preserving strong detection performance across a broad range of quality measures, reinforcing the practical value of theoretically informed parameter optimization.

\section{Conclusion}\label{conclusion}

We introduced a controllable statistical framework for logit-based watermarking that enables principled calibration of watermark strength under explicit detectability and distortion objectives. By establishing quantitative mappings between watermark parameters, detection power, and KL-based distortion, our approach transforms watermark design from heuristic tuning into statistically grounded optimization.
These findings provide a practical foundation for deploying watermarking systems with reliable statistical guarantees. A discussion of the limitations is provided in Appendix~\ref{app:discuss}.

\section*{Acknowledgements}
This research was supported a NSF grant DMS 2514400 and two NIH grants R01GM163244 and R01AI192205. The content is solely the
responsibility of the authors and does not necessarily represent the official views of NSF or NIH.

\section*{Impact Statement}
Text watermarking has significant societal impact by helping distinguish human-written content from machine-generated text in an era of large-scale automated content production. It supports the fight against misinformation, academic dishonesty, and malicious uses of generative AI. Our work improves the efficiency of such techniques, fostering a more scalable and practical deployment of watermarking in real-world systems.
\bibliographystyle{plainnat}
\bibliography{reference}

\begin{thebibliography}{30}
\providecommand{\natexlab}[1]{#1}
\providecommand{\url}[1]{\texttt{#1}}
\expandafter\ifx\csname urlstyle\endcsname\relax
  \providecommand{\doi}[1]{doi: #1}\else
  \providecommand{\doi}{doi: \begingroup \urlstyle{rm}\Url}\fi

\bibitem[Achiam et~al.(2023)Achiam, Adler, Agarwal, Ahmad, Akkaya, Aleman,
  Almeida, Altenschmidt, Altman, Anadkat, et~al.]{achiam2023gpt}
Josh Achiam, Steven Adler, Sandhini Agarwal, Lama Ahmad, Ilge Akkaya,
  Florencia~Leoni Aleman, Diogo Almeida, Janko Altenschmidt, Sam Altman,
  Shyamal Anadkat, et~al.
\newblock Gpt-4 technical report.
\newblock \emph{arXiv preprint arXiv:2303.08774}, 2023.
\newblock URL \url{https://arxiv.org/abs/2303.08774}.

\bibitem[Bellare and Rogaway(1993)]{hash}
Mihir Bellare and Phillip Rogaway.
\newblock Random oracles are practical: a paradigm for designing efficient
  protocols.
\newblock In \emph{Proceedings of the 1st ACM Conference on Computer and
  Communications Security}, CCS '93, page 62–73, New York, NY, USA, 1993.
  Association for Computing Machinery.
\newblock ISBN 0897916298.
\newblock \doi{10.1145/168588.168596}.
\newblock URL \url{https://doi.org/10.1145/168588.168596}.

\bibitem[Biderman et~al.(2023)Biderman, Schoelkopf, Anthony, Bradley, O'Brien,
  Hallahan, Khan, Purohit, Prashanth, Raff, Skowron, Sutawika, and van~der
  Wal]{biderman2023pythiasuiteanalyzinglarge}
Stella Biderman, Hailey Schoelkopf, Quentin Anthony, Herbie Bradley, Kyle
  O'Brien, Eric Hallahan, Mohammad~Aflah Khan, Shivanshu Purohit, USVSN~Sai
  Prashanth, Edward Raff, Aviya Skowron, Lintang Sutawika, and Oskar van~der
  Wal.
\newblock Pythia: A suite for analyzing large language models across training
  and scaling, 2023.
\newblock URL \url{https://arxiv.org/abs/2304.01373}.

\bibitem[Blei et~al.(2003)Blei, Ng, and Jordan]{blei2003latent}
David~M. Blei, Andrew~Y. Ng, and Michael~I. Jordan.
\newblock Latent dirichlet allocation.
\newblock \emph{J. Mach. Learn. Res.}, 3:\penalty0 993–1022, March 2003.
\newblock ISSN 1532-4435.
\newblock URL \url{https://www.jmlr.org/papers/v3/blei03a.html}.

\bibitem[Bradley(2005)]{bradley2005basic}
Richard~C Bradley.
\newblock Basic properties of strong mixing conditions. a survey and some open
  questions.
\newblock 2005.
\newblock URL \url{https://arxiv.org/abs/math/0511078}.

\bibitem[Cai et~al.(2024)Cai, Liu, Wang, Zhong, and
  Li]{cai2024betterstatisticalunderstandingwatermarking}
Zhongze Cai, Shang Liu, Hanzhao Wang, Huaiyang Zhong, and Xiaocheng Li.
\newblock Towards better statistical understanding of watermarking llms, 2024.
\newblock URL \url{https://arxiv.org/abs/2403.13027}.

\bibitem[Dalal and Misra(2024)]{dalal2024blackboxstatisticalmodel}
Siddhartha Dalal and Vishal Misra.
\newblock Beyond the black box: A statistical model for llm reasoning and
  inference, 2024.
\newblock URL \url{https://arxiv.org/abs/2402.03175}.

\bibitem[Devlin et~al.(2019)Devlin, Chang, Lee, and
  Toutanova]{devlin2019bertpretrainingdeepbidirectional}
Jacob Devlin, Ming-Wei Chang, Kenton Lee, and Kristina Toutanova.
\newblock Bert: Pre-training of deep bidirectional transformers for language
  understanding, 2019.
\newblock URL \url{https://arxiv.org/abs/1810.04805}.

\bibitem[Doukhan(1995)]{doukhan1995mixing}
Paul Doukhan.
\newblock Mixing.
\newblock In \emph{Mixing: Properties and Examples}, pages 15--23. Springer,
  1995.
\newblock URL \url{https://link.springer.com/book/10.1007/978-1-4612-2642-0}.

\bibitem[Farquhar et~al.(2024)Farquhar, Kossen, Kuhn, and
  Gal]{farquhar2024detecting}
Sebastian Farquhar, Jannik Kossen, Lorenz Kuhn, and Yarin Gal.
\newblock Detecting hallucinations in large language models using semantic
  entropy.
\newblock \emph{Nature}, 630\penalty0 (8017):\penalty0 625--630, 2024.
\newblock \doi{10.1038/s41586-024-07421-0}.
\newblock URL \url{https://doi.org/10.1038/s41586-024-07421-0}.

\bibitem[Fernandez et~al.(2023)Fernandez, Chaffin, Tit, Chappelier, and
  Furon]{fernandez2023bricksconsolidatewatermarkslarge}
Pierre Fernandez, Antoine Chaffin, Karim Tit, Vivien Chappelier, and Teddy
  Furon.
\newblock Three bricks to consolidate watermarks for large language models,
  2023.
\newblock URL \url{https://arxiv.org/abs/2308.00113}.

\bibitem[Foundation()]{wikidump}
Wikimedia Foundation.
\newblock Wikimedia downloads.
\newblock URL \url{https://dumps.wikimedia.org}.

\bibitem[Kirchenbauer et~al.(2024{\natexlab{a}})Kirchenbauer, Geiping, Wen,
  Katz, Miers, and Goldstein]{kirchenbauer2024watermarklargelanguagemodels}
John Kirchenbauer, Jonas Geiping, Yuxin Wen, Jonathan Katz, Ian Miers, and Tom
  Goldstein.
\newblock A watermark for large language models, 2024{\natexlab{a}}.
\newblock URL \url{https://arxiv.org/abs/2301.10226}.

\bibitem[Kirchenbauer et~al.(2024{\natexlab{b}})Kirchenbauer, Geiping, Wen,
  Shu, Saifullah, Kong, Fernando, Saha, Goldblum, and
  Goldstein]{kirchenbauer2024reliabilitywatermarkslargelanguage}
John Kirchenbauer, Jonas Geiping, Yuxin Wen, Manli Shu, Khalid Saifullah, Kezhi
  Kong, Kasun Fernando, Aniruddha Saha, Micah Goldblum, and Tom Goldstein.
\newblock On the reliability of watermarks for large language models,
  2024{\natexlab{b}}.
\newblock URL \url{https://arxiv.org/abs/2306.04634}.

\bibitem[Kuditipudi et~al.(2023)Kuditipudi, Thickstun, Hashimoto, and
  Liang]{kuditipudi2023robust}
Rohith Kuditipudi, John Thickstun, Tatsunori Hashimoto, and Percy Liang.
\newblock Robust distortion-free watermarks for language models.
\newblock \emph{arXiv preprint arXiv:2307.15593}, 2023.
\newblock URL \url{https://arxiv.org/abs/2307.15593}.

\bibitem[Li et~al.(2025)Li, Ruan, Wang, Long, and Su]{Li_2025}
Xiang Li, Feng Ruan, Huiyuan Wang, Qi~Long, and Weijie~J. Su.
\newblock A statistical framework of watermarks for large language models:
  Pivot, detection efficiency and optimal rules.
\newblock \emph{The Annals of Statistics}, 53\penalty0 (1), February 2025.
\newblock ISSN 0090-5364.
\newblock \doi{10.1214/24-aos2468}.
\newblock URL \url{http://dx.doi.org/10.1214/24-AOS2468}.

\bibitem[Lin(2004)]{lin-2004-rouge}
Chin-Yew Lin.
\newblock {ROUGE}: A package for automatic evaluation of summaries.
\newblock In \emph{Text Summarization Branches Out}, pages 74--81, Barcelona,
  Spain, July 2004. Association for Computational Linguistics.
\newblock URL \url{https://aclanthology.org/W04-1013/}.

\bibitem[Liu et~al.(2024)Liu, Feng, Xue, Wang, Wu, Lu, Zhao, Deng, Zhang, Ruan,
  et~al.]{liu2024deepseek}
Aixin Liu, Bei Feng, Bing Xue, Bingxuan Wang, Bochao Wu, Chengda Lu, Chenggang
  Zhao, Chengqi Deng, Chenyu Zhang, Chong Ruan, et~al.
\newblock Deepseek-v3 technical report.
\newblock \emph{arXiv preprint arXiv:2412.19437}, 2024.
\newblock URL \url{https://arxiv.org/abs/2412.19437}.

\bibitem[Mitchell et~al.(2023)Mitchell, Lee, Khazatsky, Manning, and
  Finn]{mitchell2023detectgpt}
Eric Mitchell, Yoonho Lee, Alexander Khazatsky, Christopher~D Manning, and
  Chelsea Finn.
\newblock Detectgpt: Zero-shot machine-generated text detection using
  probability curvature.
\newblock In \emph{International conference on machine learning}, pages
  24950--24962. PMLR, 2023.
\newblock URL \url{https://proceedings.mlr.press/v202/mitchell23a.html}.

\bibitem[Papineni et~al.(2002)Papineni, Roukos, Ward, and
  Zhu]{papineni2002bleu}
Kishore Papineni, Salim Roukos, Todd Ward, and Wei-Jing Zhu.
\newblock Bleu: a method for automatic evaluation of machine translation.
\newblock In \emph{Proceedings of the 40th annual meeting of the Association
  for Computational Linguistics}, pages 311--318, 2002.
\newblock URL \url{https://aclanthology.org/P02-1040/}.

\bibitem[Radford et~al.(2019)Radford, Wu, Child, Luan, Amodei, Sutskever,
  et~al.]{radford2019language}
Alec Radford, Jeffrey Wu, Rewon Child, David Luan, Dario Amodei, Ilya
  Sutskever, et~al.
\newblock Language models are unsupervised multitask learners.
\newblock \emph{OpenAI blog}, 1\penalty0 (8):\penalty0 9, 2019.

\bibitem[Raffel et~al.(2023)Raffel, Shazeer, Roberts, Lee, Narang, Matena,
  Zhou, Li, and Liu]{raffel2023exploringlimitstransferlearning}
Colin Raffel, Noam Shazeer, Adam Roberts, Katherine Lee, Sharan Narang, Michael
  Matena, Yanqi Zhou, Wei Li, and Peter~J. Liu.
\newblock Exploring the limits of transfer learning with a unified text-to-text
  transformer, 2023.
\newblock URL \url{https://arxiv.org/abs/1910.10683}.

\bibitem[Sadasivan et~al.(2023)Sadasivan, Kumar, Balasubramanian, Wang, and
  Feizi]{sadasivan2023can}
Vinu~Sankar Sadasivan, Aounon Kumar, Sriram Balasubramanian, Wenxiao Wang, and
  Soheil Feizi.
\newblock Can ai-generated text be reliably detected?
\newblock \emph{arXiv preprint arXiv:2303.11156}, 2023.
\newblock URL \url{https://arxiv.org/abs/2303.11156}.

\bibitem[Shumailov et~al.(2023)Shumailov, Shumaylov, Zhao, Gal, Papernot, and
  Anderson]{shumailov2023curse}
Ilia Shumailov, Zakhar Shumaylov, Yiren Zhao, Yarin Gal, Nicolas Papernot, and
  Ross Anderson.
\newblock The curse of recursion: Training on generated data makes models
  forget.
\newblock \emph{arXiv preprint arXiv:2305.17493}, 2023.
\newblock URL \url{https://arxiv.org/abs/2305.17493}.

\bibitem[Team et~al.(2024)Team, Georgiev, Lei, Burnell, Bai, Gulati, Tanzer,
  Vincent, Pan, Wang, et~al.]{team2024gemini}
Gemini Team, Petko Georgiev, Ving~Ian Lei, Ryan Burnell, Libin Bai, Anmol
  Gulati, Garrett Tanzer, Damien Vincent, Zhufeng Pan, Shibo Wang, et~al.
\newblock Gemini 1.5: Unlocking multimodal understanding across millions of
  tokens of context.
\newblock \emph{arXiv preprint arXiv:2403.05530}, 2024.
\newblock URL \url{https://arxiv.org/abs/2403.05530}.

\bibitem[Wouters(2024)]{pmlr-v235-wouters24a}
Bram Wouters.
\newblock Optimizing watermarks for large language models.
\newblock In Ruslan Salakhutdinov, Zico Kolter, Katherine Heller, Adrian
  Weller, Nuria Oliver, Jonathan Scarlett, and Felix Berkenkamp, editors,
  \emph{Proceedings of the 41st International Conference on Machine Learning},
  volume 235 of \emph{Proceedings of Machine Learning Research}, pages
  53251--53269. PMLR, 21--27 Jul 2024.
\newblock URL \url{https://proceedings.mlr.press/v235/wouters24a.html}.

\bibitem[Xu et~al.(2023)Xu, Song, Iyyer, and
  Choi]{xu2023criticalevaluationevaluationslongform}
Fangyuan Xu, Yixiao Song, Mohit Iyyer, and Eunsol Choi.
\newblock A critical evaluation of evaluations for long-form question
  answering, 2023.
\newblock URL \url{https://arxiv.org/abs/2305.18201}.

\bibitem[Zhang et~al.(2022)Zhang, Roller, Goyal, Artetxe, Chen, Chen, Dewan,
  Diab, Li, Lin, Mihaylov, Ott, Shleifer, Shuster, Simig, Koura, Sridhar, Wang,
  and Zettlemoyer]{zhang2022optopenpretrainedtransformer}
Susan Zhang, Stephen Roller, Naman Goyal, Mikel Artetxe, Moya Chen, Shuohui
  Chen, Christopher Dewan, Mona Diab, Xian Li, Xi~Victoria Lin, Todor Mihaylov,
  Myle Ott, Sam Shleifer, Kurt Shuster, Daniel Simig, Punit~Singh Koura, Anjali
  Sridhar, Tianlu Wang, and Luke Zettlemoyer.
\newblock Opt: Open pre-trained transformer language models, 2022.
\newblock URL \url{https://arxiv.org/abs/2205.01068}.

\bibitem[Zhang et~al.(2020)Zhang, Kishore, Wu, Weinberger, and
  Artzi]{zhang2020bertscoreevaluatingtextgeneration}
Tianyi Zhang, Varsha Kishore, Felix Wu, Kilian~Q. Weinberger, and Yoav Artzi.
\newblock Bertscore: Evaluating text generation with bert, 2020.
\newblock URL \url{https://arxiv.org/abs/1904.09675}.

\bibitem[Zhao et~al.(2023)Zhao, Ananth, Li, and Wang]{zhao2023provable}
Xuandong Zhao, Prabhanjan Ananth, Lei Li, and Yu-Xiang Wang.
\newblock Provable robust watermarking for ai-generated text.
\newblock \emph{arXiv preprint arXiv:2306.17439}, 2023.
\newblock URL \url{https://arxiv.org/abs/2306.17439}.

\end{thebibliography}
\newpage
\appendix

\section{Proof and Discussion of Lemmas and Theorems}
\subsection{Proof of Lemma \ref{lem:null-bernoulli}}
\label{app:null-bernoulli}
For \(a\in\{0,1\}\), define
\[
p(a):=\gamma^a(1-\gamma)^{1-a}.
\]
Let
\[
\mathcal X:=\sigma(x_1,\ldots,x_n)
\]
be the sigma-field generated by the token sequence. Under Assumption~\ref{ass:iid-green},
the green lists \(G_1,\ldots,G_n\) are independent of \(\mathcal X\) and are
mutually independent. Moreover, for any fixed token \(v\in V\),
\[
\Prob(v\in G_i)=\gamma.
\]
Thus, conditional on \(\mathcal X\), the token \(x_i\) is fixed and
\[
\Prob(I_i=a\mid \mathcal X)=p(a),
\qquad a\in\{0,1\}.
\]

We now prove by induction that for every \(k=1,\ldots,n\) and every
\((a_1,\ldots,a_k)\in\{0,1\}^k\),
\[
\Prob(I_1=a_1,\ldots,I_k=a_k\mid \mathcal X)
=
\prod_{i=1}^k p(a_i).
\]
The case \(k=1\) follows from the preceding display. Suppose the statement
holds for \(k-1\). Since \(I_1,\ldots,I_{k-1}\) are measurable with respect to
\(\sigma(\mathcal X,G_1,\ldots,G_{k-1})\), and since \(G_k\) is independent of
\(\mathcal X,G_1,\ldots,G_{k-1}\),
\[
\Prob(I_k=a_k\mid \mathcal X,G_1,\ldots,G_{k-1})
=p(a_k).
\]
Therefore, by the tower property,
\[
\begin{aligned}
&\Prob(I_1=a_1,\ldots,I_k=a_k\mid \mathcal X) \\
&=
\E\!\left[
\mathbf 1\{I_1=a_1,\ldots,I_{k-1}=a_{k-1}\}
\Prob(I_k=a_k\mid \mathcal X,G_1,\ldots,G_{k-1})
\mid \mathcal X
\right]\\
&=
p(a_k)\,
\Prob(I_1=a_1,\ldots,I_{k-1}=a_{k-1}\mid \mathcal X)\\
&=
\prod_{i=1}^k p(a_i).
\end{aligned}
\]
This completes the induction. Taking expectations over \(\mathcal X\) gives
\[
\Prob(I_1=a_1,\ldots,I_n=a_n)
=
\prod_{i=1}^n p(a_i)
=
\prod_{i=1}^n \Prob(I_i=a_i).
\]
Hence \(I_1,\ldots,I_n\) are mutually independent. Since each
\(I_i\sim\mathrm{Bernoulli}(\gamma)\), the sequence
\(\{I_i\}_{i=1}^n\) is i.i.d. Bernoulli\((\gamma)\).

\subsection{Proof of Lemma \ref{lem:prob-alt-ori}}
\label{app:prob-alt-ori}
We first review the token generation process of LLMs and introduce the necessary preliminaries.

Let the softmax mapping \(\boldsymbol{\sigma}:\mathbb{R}^{|V|}\to\Delta^{|V|-1}\) be defined as
\[
\boldsymbol{\sigma}_v(\boldsymbol z)
=\frac{e^{z_v}}{\sum_{u\in V} e^{z_u}},
\qquad v\in V,
\]
where \(\Delta^{|V|-1}
=\left\{\boldsymbol p\in\mathbb{R}_{\ge0}^{|V|}\mid\sum_{v\in V}p_v=1\right\}\)
denotes the probability simplex over the vocabulary \(V\).

Given a prefix \(x_{<i}\), let \(M(x_{<i})\in\mathbb{R}^{|V|}\) denote the logit vector produced by the language model. The next token is sampled according to
\[
\Prob(x_i=v\mid x_{<i})
=\boldsymbol{\sigma}_v\!\big(M(x_{<i})\big),
\qquad
x_i\sim\mathrm{Categorical}\big(\boldsymbol{\sigma}(M(x_{<i}))\big).
\]

\paragraph{Green-List Probability.}
Let \(G_i\subseteq V\) denote the green list at position \(i\). The probability that the generated token belongs to the green list is
\[
\Prob(x_i\in G_i\mid x_{<i})
=\sum_{v\in G_i}\Prob(x_i=v\mid x_{<i})
=\sum_{v\in G_i}\boldsymbol{\sigma}_v\!\big(M(x_{<i})\big).
\]

Define
\[
A=\sum_{v\in G_i}e^{M_v(x_{<i})},
\qquad
B=\sum_{v\notin G_i}e^{M_v(x_{<i})}.
\]
Then
\[
\Prob(x_i\in G_i\mid x_{<i})
=\frac{A}{A+B}.
\]
Equivalently, letting
\[
R=\frac{B}{A},
\]
we have
\[
\Prob(x_i\in G_i\mid x_{<i})=\frac{1}{1+R}.
\]

\paragraph{Logit Biasing and Watermarked Distribution.}
To embed a watermark, KGW adds a constant bias \(\delta>0\) to the logits corresponding to tokens in the green list. Specifically, for each \(v\in V\),
\[
\Prob_{\delta}(x_i=v\mid x_{<i})
=\boldsymbol{\sigma}_v\!\big(M(x_{<i})+\delta\,\mathbf 1_{v\in G_i}\big).
\]

The resulting green list probability under the biased distribution is
\[
\Prob_{\delta}(x_i\in G_i\mid x_{<i})
=\sum_{v\in G_i}\Prob_{\delta}(x_i=v\mid x_{<i})
=\frac{e^\delta A}{e^\delta A+B}
=\frac{1}{1+\frac{R}{e^\delta}}.
\]

\paragraph{Closed-Form Relationship.}
Let
\[
p:=\Prob(x_i\in G_i\mid x_{<i})
\]
Since \(R=(1-p)/p\), we obtain
\[
\Prob_{\delta}(x_i\in G_i\mid x_{<i})
=\frac{e^\delta}{e^\delta+\frac{1-p}{p}}
=\frac{e^\delta\,p}{(e^\delta-1)p+1}
\]

Therefore, the green list probability after watermark embedding satisfies
\[
\Prob_{\delta}(x_i\in G_i\mid x_{<i})
=\frac{e^\delta\,\Prob(x_i\in G_i\mid x_{<i})}
{(e^\delta-1)\Prob(x_i\in G_i\mid x_{<i})+1}
\]

Since the conditional distribution of \(x_i\) given \(x_{<i}\) is categorical
with parameter \(\boldsymbol{\sigma}(M(x_{<i}))\), conditioning on \(x_{<i}\)
is equivalent to conditioning on the induced distribution parameter.
    \[
\Prob_{\delta}(x_i\in G_i\mid \boldsymbol{\sigma}^{(i)})=\frac{e^\delta \Prob(x_i\in G_i\mid \boldsymbol{\sigma}^{(i)})}{(e^\delta-1)\Prob(x_i\in G_i\mid \boldsymbol{\sigma}^{(i)})+1}.
\]

\subsection{Proof of Theorem~\ref{thm:alt_norm}}
\label{app:alt_norm}
\begin{proof}
The sequence is strictly stationary with marginal distribution $I_t\sim \text{Bernoulli}(\gamma')$.
For integers $a, b$, define
\[
\mathcal F_{-\infty}^b :=\sigma(I_t:\ t\le b),\qquad
\mathcal F_a^{\infty} := \sigma(I_t:\ t\ge a).
\]
Essentially, Assumption \ref{ass:MI} is:
\[
\mathcal I(\mathcal F_{-\infty}^b;\mathcal F_a^{\infty})\leq C\rho^{a-b},
\qquad a>b.
\]
\medskip
\noindent
For two sub-$\sigma$-algebras $\mathcal U,\mathcal V\subset\mathcal A$, define alpha mixing
\[
\alpha(\mathcal U,\mathcal V)
:=
\sup_{A\in\mathcal U,\;B\in\mathcal V}
\bigl|\mathbb P(A\cap B)-\mathbb P(A)\mathbb P(B)\bigr|
\]
For $n\ge 1$, define the strong mixing coefficients of $\{I_t\}$ by
\[
\alpha(n)
:=
\sup_{j\in\mathbb Z^+}
\alpha\!\left(\mathcal F_{-\infty}^{\,j},\,\mathcal F_{j+n}^{\,\infty}\right)
\]

Let \(\mathcal I(\mathcal U;\mathcal V)\) denote the coefficient of information between
$\sigma$-fields $\mathcal U$ and $\mathcal V$.
By Assumption \ref{ass:MI} there exist constants $C>0$ and $\rho\in(0,1)$ such that, for all
$j\in\mathbb Z$ and all $n\ge 1$,
\begin{equation}\label{eq:info_decay_uniform}
\mathcal I\!\big(\mathcal F_{-\infty}^{\,j};\mathcal F_{j+n}^{\,\infty}\big)\le C\rho^{\,n}
\end{equation}

By~\cite{bradley2005basic}, Eqs.~(1.11) and (1.18),
\[
2\alpha(\mathcal U,\mathcal V)
\le \bigl[\mathcal I(\mathcal U;\mathcal V)\bigr]^{1/2}
\]
Applying this with $\mathcal U=\mathcal F_{-\infty}^{\,j}$ and
$\mathcal V=\mathcal F_{j+n}^{\,\infty}$ gives, for all $j\in\mathbb Z$,
\[
\alpha\!\left(\mathcal F_{-\infty}^{\,j},\,\mathcal F_{j+n}^{\,\infty}\right)
\le \tfrac12 \Bigl[\mathcal I\!\big(\mathcal F_{-\infty}^{\,j};\mathcal F_{j+n}^{\,\infty}\big)\Bigr]^{1/2}
\]
Taking the supremum over $j\in\mathbb Z$ and using \eqref{eq:info_decay_uniform} yields
\[
\alpha(n)
\le \tfrac12\sqrt{C}\,(\sqrt\rho)^{\,n}.
\]
Therefore, $\{I_t\}$ is geometrically $\alpha$-mixing.

\paragraph{CLT} We have
$\mathbb E|I_1|^{2+\delta}<\infty$ for $\delta=1$,
and

Since $\alpha(n)$ decays geometrically, we have
\[
\sum_{n=1}^\infty \alpha(n)^{\delta/(2+\delta)}<\infty
\]
along with Assumption \ref{ass:NG}, by the CLT for $\alpha$-mixing sequences~\cite{doukhan1995mixing},
condition (3) on p.~45,
\[
\frac{\sum_{t=1}^n I_t -n\E I_t}{\sqrt n}\ \xrightarrow{d}\ \mathcal N(0,\sigma^2),
\]
where
\[
\sigma^2=c\Var(I_1)=c\gamma'(1-\gamma').
\]
Therefore,
\[
\sqrt{n}\left(\frac{S_n}{n}-\gamma'\right)
\xrightarrow{d}
\mathcal N\!\left(0,c\gamma'(1-\gamma')\right).
\]
\end{proof}

\subsection{Proof of Theorem \ref{thm:gamma-prime}}
\label{app:gamma-prime}
We begin by proving the theorem with a fixed subset $G\subseteq V$.

Assume that
\[
\boldsymbol{\sigma} \sim \mathrm{Dir}(1,\ldots,1),
\]
and let $|V|$ denote the vocabulary size.

Our goal is to compute
\[
\Prob(x\in G)
= \mathbb{E}_{\boldsymbol{\sigma}}\!\left[\sum_{i\in G}\sigma_i\right],
\qquad
\Prob_{\delta}(x\in G)
= \mathbb{E}_{\boldsymbol{\sigma}}\!\left[
\frac{e^\delta}{e^\delta+\left(\sum_{i\in G}\sigma_i\right)^{-1}-1}
\right]
\]

Define
\[
Y := \sum_{i\in G}\sigma_i
\]
By the additivity property of the Dirichlet distribution, we have
\[
Y \sim \mathrm{Beta}(\alpha_G,\alpha_{G^c}),
\qquad
\alpha_G = \sum_{i\in G}1 = |G|,
\quad
\alpha_{G^c} = \sum_{i\in G^c}1 = |V|-|G|
\]

Using the properties of the Beta distribution,
\[
\Prob(x\in G)
= \mathbb{E}[Y]
= \frac{\alpha_G}{\alpha_G+\alpha_{G^c}}
= \frac{|G|}{|V|}=\gamma .
\]

By definition,
\[
\Prob_{\delta}(x\in G)
= \mathbb{E}\!\left[\frac{e^\delta}{e^\delta+Y^{-1}-1}\right]
= \mathbb{E}\!\left[\frac{e^\delta Y}{1+(e^\delta-1)Y}\right]\]
Using the beta density and denoting the beta function by $\text{B}(a,b)$,
\[
\Prob_{\delta}(x\in G)
=
\frac{e^\delta}{\mathrm{B}(|G|,|V|-|G|)}
\int_0^1
\frac{y}{1+(e^\delta-1)y}\,
y^{|G|-1}(1-y)^{|V|-|G|-1}\,dy
\]
Equivalently, this expression admits the closed form
\[
\Prob_{\delta}(x\in G)
=
e^\delta\frac{|G|}{|V|}
\;{}_2F_1\!\left(
1,\ |G|+1;\ |V|+1;\ -(e^\delta-1)
\right)=e^\delta\gamma
\;{}_2F_1\!\left(
1,\ \gamma|V|+1;\ |V|+1;\ -(e^\delta-1)
\right)
,
\]
where ${}_2F_1$ is the Gauss hypergeometric function.
This gives the exact evaluation of the probability. We then show the asymptotic limit.
\paragraph{Asymptotic Limit.}

Recall,
\[
Y
\sim \mathrm{Beta}(|G|,|V|-|G|),
\]
Consequently,
\[
\mathbb{E}[Y]=\gamma,
\qquad
\operatorname{Var}(Y)=
\frac{|G|(|V|-|G|)}{|V|^2(|V|+1)}\sim\frac{1}{|V|}
\]

Since $\operatorname{Var}(Y)\to0$, we have $Y\xrightarrow{P}\gamma$.

The function
\[
g(y)=\frac{e^\delta y}{1+(e^\delta-1)y}
\]
is continuous and bounded on $[0,1]$, hence by the dominated
convergence theorem,
\[
\mathbb{E}[g(Y)] \to g(\gamma)
=
\frac{e^\delta\gamma}{1+\gamma(e^\delta-1)}
\]

We compute
\[
g'(y)=\frac{e^\delta}{(1+(e^\delta-1)y)^2}
\le e^\delta,
\qquad y\in[0,1],
\]
Thus, $g$ is $e^\delta$-Lipschitz:
\[
|g(y)-g(\gamma)|\le e^\delta |y-\gamma|
\]
Taking expectation and applying Jensen,
\[
|\mathbb{E}[g(Y)]-g(\gamma)|
\le e^\delta \mathbb{E}|Y-\gamma|
\le e^\delta \sqrt{\operatorname{Var}(Y)}\sim O(|V|^{-1/2})
\]

Thus
\[
\Prob_\delta(x\in G)
=
\frac{e^\delta\gamma}{1+\gamma(e^\delta-1)}+O(|V|^{-1/2}).
\]

\paragraph{Averaging over Random $G$.}
If $G$ is drawn uniformly,
then in the symmetric Dirichlet setting the quantities
$\Prob(x\in G)$ and $\Prob_{\delta}(x\in G)$ depend only on $|G|$ and not on the
specific realization of $G$.
Consequently,
\[
\mathbb{E}_G[\Prob(x\in G)] = \gamma,
\qquad
\mathbb{E}_G[\Prob_{\delta}(x\in G)] = \Prob_{\delta}(x\in G),
\]
and the above limit remains valid after averaging over $G$.

\paragraph{Discussion of the General Case.}

One may argue that token distributions in natural language are inherently
uneven. Mixtures of Dirichlet distributions~\cite{dalal2024blackboxstatisticalmodel} and latent Dirichlet models~\cite{blei2003latent} naturally accommodate imbalanced token frequencies.

For general Dirichlet parameters $(\alpha_1,\ldots,\alpha_{|V|})$, the aggregated
parameter $\alpha_G=\sum_{i\in G}\alpha_i$ becomes random when $G$ is sampled
uniformly.
When the vocabulary size is large and the Dirichlet parameters are not excessively
heterogeneous, standard concentration results for sampling without replacement
imply that $\alpha_G/\alpha_0$ concentrates around $\gamma$.
In this setting, the symmetric Dirichlet analysis derived above provides an accurate approximation to the general case, with deviations vanishing as the total concentration parameter increases.

\paragraph{Numerical Verification.}
As a sanity check, we numerically evaluated the exact finite-$|V|$ expression and
compared it to its asymptotic limit.
We observe that the discrepancy is already small for vocabulary sizes on the order of
$|V|\approx 500$, and it decreases as $|V|$ continues to increase.
This suggests that, for the vocabulary sizes encountered in contemporary language
models, the asymptotic formula offers an accurate approximation.
\begin{table}[H]
\centering
\caption{Numerical values of the hypergeometric function and errors}
\label{tab:hyper_values}
\begin{tabular}{rrrrrr}
\toprule
$\gamma$ & $\delta$ & $|V|$ & ${}_2F_1$ & Limit & Error \\
\midrule
0.200000 & 0.500000 & 500 & 0.884438 & 0.885156 & 0.000718 \\
0.200000 & 1.000000 & 500 & 0.743109 & 0.744238 & 0.001129 \\
0.200000 & 2.000000 & 500 & 0.438154 & 0.439018 & 0.000864 \\
0.300000 & 0.500000 & 500 & 0.836557 & 0.837089 & 0.000531 \\
0.300000 & 1.000000 & 500 & 0.659165 & 0.659855 & 0.000690 \\
0.300000 & 2.000000 & 500 & 0.342491 & 0.342851 & 0.000361 \\
0.500000 & 0.500000 & 500 & 0.754803 & 0.755081 & 0.000279 \\
0.500000 & 1.000000 & 500 & 0.537616 & 0.537883 & 0.000267 \\
0.500000 & 2.000000 & 500 & 0.238319 & 0.238406 & 0.000087 \\
0.700000 & 0.500000 & 500 & 0.687582 & 0.687708 & 0.000126 \\
0.700000 & 1.000000 & 500 & 0.453872 & 0.453968 & 0.000096 \\
0.700000 & 2.000000 & 500 & 0.182714 & 0.182737 & 0.000023 \\
0.200000 & 0.500000 & 1000 & 0.884797 & 0.885156 & 0.000359 \\
0.200000 & 1.000000 & 1000 & 0.743672 & 0.744238 & 0.000566 \\
0.200000 & 2.000000 & 1000 & 0.438586 & 0.439018 & 0.000432 \\
0.300000 & 0.500000 & 1000 & 0.836823 & 0.837089 & 0.000266 \\
0.300000 & 1.000000 & 1000 & 0.659510 & 0.659855 & 0.000345 \\
0.300000 & 2.000000 & 1000 & 0.342671 & 0.342851 & 0.000180 \\
0.500000 & 0.500000 & 1000 & 0.754942 & 0.755081 & 0.000140 \\
0.500000 & 1.000000 & 1000 & 0.537749 & 0.537883 & 0.000134 \\
0.500000 & 2.000000 & 1000 & 0.238363 & 0.238406 & 0.000043 \\
0.700000 & 0.500000 & 1000 & 0.687645 & 0.687708 & 0.000063 \\
0.700000 & 1.000000 & 1000 & 0.453920 & 0.453968 & 0.000048 \\
0.700000 & 2.000000 & 1000 & 0.182726 & 0.182737 & 0.000012 \\
\bottomrule
\end{tabular}
\end{table}

\subsection{Proof of Lemma \ref{lem:KL}}
\label{app:KL}
\paragraph{Plug-In Interpretation via Bernoulli KL Divergence}
Under the plug-in approximation \(P(x\in G)\approx\gamma\), consider the result from the perspective of the binary statistic
\[
I=\mathbf{1}_{x\in G}.
\]
\[
I \sim \mathrm{Bernoulli}(\gamma) \quad \text{under } \Prob,
\]
\[
I \sim \mathrm{Bernoulli}(\gamma') \quad \text{under } \Prob_\delta .
\]
The KL divergence between these two Bernoulli distributions is straightforward:
\[
\mathrm{KL}\bigl(\mathrm{Bern}(\gamma')\|
                \mathrm{Bern}(\gamma)\bigr)
= \gamma' \log\frac{\gamma'}{\gamma}
+ (1-\gamma')\log\frac{1-\gamma'}{1-\gamma}.
\]
From
\[
\gamma'
= \frac{\gamma e^\delta}{1+\gamma(e^\delta-1)},
\]
substituting this expression into the previous formula yields the stated conclusion.

\paragraph{Proof of KL Monotonicity for $\delta$}
Recall
\[
D_{\mathrm{KL}}(\gamma,\delta) = \delta\,\gamma'(\gamma,\delta) - \log\!\left(1+\gamma(e^\delta-1)\right),
\qquad
\gamma'(\gamma,\delta)=\frac{e^\delta\gamma}{1+\gamma(e^\delta-1)}.
\]
Write
\[
D(\delta):=1+\gamma(e^\delta-1),
\qquad
D'(\delta)=\gamma e^\delta.
\]

First, differentiate $\gamma'$:
\[
\frac{\partial \gamma'}{\partial \delta}
= \frac{\gamma e^\delta D - \gamma e^\delta D'}{D^2}
= \frac{\gamma e^\delta(D-\gamma e^\delta)}{D^2}
= \frac{\gamma e^\delta(1-\gamma)}{D^2}.
\]
Next note that
\[
\frac{\partial}{\partial \delta}\log D = \frac{D'}{D} = \frac{\gamma e^\delta}{D} = \gamma'.
\]
Therefore, by the product rule,
\begin{align*}
\frac{\partial}{\partial \delta}D_{\mathrm{KL}}(\gamma,\delta)
&= \gamma' + \delta\frac{\partial\gamma'}{\partial\delta} - \frac{\partial}{\partial\delta}\log D \\
&= \gamma' + \delta\frac{\partial\gamma'}{\partial\delta} - \gamma' \\
&= \delta\frac{\partial\gamma'}{\partial\delta}.
\end{align*}
Substituting the expression for $\partial\gamma'/\partial\delta$ gives
\[
\frac{\partial}{\partial \delta}D_{\mathrm{KL}}(\gamma,\delta)
= \delta\cdot \frac{\gamma e^\delta(1-\gamma)}{D(\delta)^2}.
\]
Since $\delta>0$, $\gamma\in(0,1)$, $e^\delta>0$, and $D(\delta)>0$, the right-hand side is strictly positive.
Hence $D_{\mathrm{KL}}(\gamma,\delta)$ is strictly increasing in $\delta$ for all $\delta>0$.
\paragraph{Proof of KL Non-Monotonicity for $\gamma$}
\[
D_{\mathrm{KL}}(\gamma,\delta)
=\delta\,\gamma'(\gamma,\delta)-\log D,\qquad
\gamma'(\gamma,\delta)=\frac{e^\delta\gamma}{1+\gamma(e^\delta-1)},\qquad
D=1+\gamma(e^\delta-1).
\]

Let \(a:=e^\delta-1>0\). Then \(D=1+a\gamma\) and
\[
\gamma'=\frac{e^\delta\gamma}{1+a\gamma}.
\]

Differentiate with respect to \(\gamma\):
\[
\frac{\partial \gamma'}{\partial\gamma}
=\frac{e^\delta(1+a\gamma)-e^\delta\gamma a}{(1+a\gamma)^2}
=\frac{e^\delta}{(1+a\gamma)^2},
\qquad
\frac{\partial}{\partial\gamma}\log D=\frac{a}{1+a\gamma}.
\]

Hence
\[
\frac{\partial}{\partial\gamma}D_{\mathrm{KL}}(\gamma,\delta)
=\delta\frac{e^\delta}{(1+a\gamma)^2}-\frac{a}{1+a\gamma}
=\frac{N(\gamma)}{(1+a\gamma)^2},
\]
where
\[
N(\gamma):=\delta e^\delta-a(1+a\gamma).
\]

Observe that \(N(\gamma)\) is a strictly decreasing linear function of \(\gamma\)
since \(N'(\gamma)=-a^2<0\). Denote
\[
\gamma_0(\delta)
=
\frac{\delta e^\delta-(e^\delta-1)}{(e^\delta-1)^2}
\]
 the solution for \(N(\gamma_0)=0\).
Since the denominator \((1+a\gamma)^2>0\),
\[
\frac{\partial}{\partial\gamma}D_{\mathrm{KL}}(\gamma,\delta)>0 \quad (\gamma<\gamma_0),
\qquad
\frac{\partial}{\partial\gamma}D_{\mathrm{KL}}(\gamma,\delta)<0 \quad (\gamma>\gamma_0).
\]

Therefore \(D_{\mathrm{KL}}(\gamma,\delta)\) first increases and then decreases in \(\gamma\),
and hence is not monotonic in \(\gamma\).
\subsection{Derivation of \texorpdfstring{$\gamma^*$}{gamma*}}
\label{app:gamma-star}
Recall the power formula Eq. \ref{eq:power}

\[
\pi^*(\gamma,\delta)
\;\approx\;
\Phi\!\left(
\frac{\sqrt{n}(\gamma'-\gamma)-z_{1-\alpha}\sqrt{\gamma(1-\gamma)}}
{\sqrt{c\gamma'(1-\gamma')}}
\right)
\]

To achieve a power greater than 0.5, one would require the input to be positive, that is
\[
\sqrt{n}(\gamma'-\gamma)-z_{1-\alpha}\sqrt{\gamma(1-\gamma)}>0
\]
Relaxing $\gamma'$ to 1,
\[
\sqrt{n}(1-\gamma)-z_{1-\alpha}\sqrt{\gamma(1-\gamma)}>0
\]
Therefore,
\[
\gamma <\gamma^*= \frac{n}{\,n + z_{1-\alpha}^{2}\,}
\]
\section{Additional Experiments and Experiment Settings}
\label{app:experiment}
\subsection{Baseline Methods}

Table~\ref{tab:baseline_methods} summarizes the baseline watermarking methods evaluated in our experiments.

\begin{table}[H]
\centering
\caption{Baseline watermarking methods used for comparison.}
\label{tab:baseline_methods}
\begin{tabular}{lll}
\toprule
Method & Reference & Description \\
\midrule
DP &   \cite{cai2024betterstatisticalunderstandingwatermarking}
& KL divergence - Difference in probability optimization \\
OPT-style &   \cite{pmlr-v235-wouters24a}
& Log perplexity - Difference in green-token optimization \\
Grid Search & --
& Exhaustive heuristic search over watermark parameter grid \\
Ours & This work
& KL divergence - Statistical power optimization \\
\bottomrule
\end{tabular}
\end{table}

\subsection{Semantic Measurements}
\label{app:sem-mes}
\paragraph{BLEU}~\cite{papineni2002bleu} is a precision-oriented metric that computes the overlap of n-grams between the generated text and reference text. It is widely used to measure generation accuracy and fluency. BLEU scores are computed using the NLTK implementation, which follows the original formulation by~\cite{papineni2002bleu}.
\paragraph{ROUGE}~\cite{lin-2004-rouge} is recall-oriented and measures how much of the reference content is covered by the generated text, making it particularly suitable for evaluating content preservation and coverage. ROUGE scores are calculated using the Python \texttt{rouge-score} library, a standard reimplementation of the ROUGE metric.
\paragraph{BERTScore}~\cite{zhang2020bertscoreevaluatingtextgeneration} is an evaluation metric that leverages contextual embeddings from pre-trained bidirectional Transformer encoders (BERT)~\cite{devlin2019bertpretrainingdeepbidirectional} to measure semantic similarity between candidate and reference texts. Unlike BLEU or ROUGE, which rely on surface-level n-gram overlap, BERTScore captures deeper semantic correspondence through embedding-based matching. BERTScore is computed using the official \texttt{bert-score} package.

\subsection{Hyperparameter Ranges}

Table~\ref{tab:hyperparameters} reports the hyperparameter ranges used for all baseline methods and our proposed approach.

\begin{table}[H]
\centering
\caption{Hyperparameter search ranges for all methods.}
\label{tab:hyperparameters}
\begin{tabular}{llll}
\toprule
Method & Parameter & Symbol & Values \\
\midrule
DP
& Distortion budget
& $\Delta$
& $\{0.1, 0.2, 0.3, 0.4, 0.5, 0.6, 0.7, 0.8, 0.9\}$ \\

\midrule
OPT-style
& Green-list ratio
& $\gamma$
& $\{0.1, 0.2, 0.3, 0.4, 0.5, 0.6, 0.7, 0.8, 0.9\}$ \\
& Logit bias scale
& $\beta$
& $\{0.5, 1, 2, 5, 10\}$ \\

\midrule
Grid Search
& Green-list ratio
& $\gamma$
& $\{0.1, 0.2, 0.3, 0.4, 0.5, 0.6, 0.7, 0.8, 0.9\}$ \\
& Logit bias strength
& $\delta$
& $\{0.5, 1, 2, 5, 10\}$ \\

\midrule
Ours
& Search initializer
& $\gamma^*$
& $n/(n + z_{1-\alpha}^{2})$ \\
& KL Budget
& $D_{\mathrm{KL}}$
& $-\log\gamma^*+0.025$\\
& Reduced-strength tradeoff
& $(\gamma, \delta)$
& $\gamma \in \{0.1, 0.2\}$, $\delta \in \{1,2,3,4,5\}$ \\
\bottomrule
\end{tabular}
\end{table}

\subsection{Optimization Details}

For our method, the constrained optimization problem is solved numerically under a KL budget constraint. In addition to the optimal solution, we evaluate reduced-strength configurations to visualize the distortion-detectability trade-off curve.
For the closed-form power-calibration configuration, we set \(c=1\); empirical estimation of \(c\) is discussed in Appendix~\ref{app:c-est}.

\subsection{Implementation Consistency and Fairness}

All methods are implemented using a unified generation and detection pipeline based on the public KGW implementation~\cite{kirchenbauer2024reliabilitywatermarkslargelanguage}.

For all baselines except OPT and our method, we use the same sampling strategy, detection statistic, and evaluation protocol. For OPT, the only difference is in the watermark generation phase, where we implemented their original approach; all other components remain the same.
This design ensures that observed performance differences are attributable to watermark parameterization rather than implementation artifacts.
For the baselines, we explored a broad and comprehensive range of settings beyond those used in prior work to ensure optimal performance.

We use the n-gram filtering from~\cite{fernandez2023bricksconsolidatewatermarkslarge} for better result accuracy.
\subsection{Z-Score Detector and Significance Level}

We use the standard KGW detection statistic based on the normalized green-token count. Under $H_0$, the indicator variables form an i.i.d. Bernoulli($\gamma$) process, and the resulting sum admits a normal approximation, which motivates the z-score formulation.
Given a generated sequence of length $n$, the test statistic is defined as
\[
S_n = \sum_{i=1}^{n} \mathbf{1}\{x_i \in G_i\},
\]
which is standardized as
\[
Z = \frac{S_n - n\gamma}{\sqrt{n\gamma(1-\gamma)}}.
\]

Detection is performed by thresholding $Z$ at $z_{1-\alpha}$, the $(1-\alpha)$ quantile of the standard normal distribution.

We set $\alpha=0.05$, which is a standard choice in hypothesis testing and is consistent with prior watermarking evaluations.

\paragraph{Short-Sequence Evaluation.}
Prior work shows that KGW-style watermark statistics perform well with increasing sequence length due to signal accumulation~\cite{kirchenbauer2024reliabilitywatermarkslargelanguage}, leading to artificially inflated detectability.
To avoid this effect and better isolate statistical efficiency differences between parameter selection methods, we evaluate detection on short generations with $n=50$ tokens.

\paragraph{Long-Form Generation and the Dual Formulation.}
For long-form generation, the detection signal accumulates with sequence length:
in the power approximation, the watermark-induced separation is amplified by the
\(\sqrt n\) term. This motivates the dual formulation, which fixes a target
detection power and searches for the smallest per-token KL budget \(K_0\).
Long sequences can therefore maintain detectability through signal accumulation
while reducing per-token distortion, rather than relying on a stronger watermark
at each token.

\subsection{Detector Modularity and Signal-Removal Robustness}
\label{app:detector-modularity}

Although the closed-form calibration in the main text is instantiated with the
standard full-sequence \(z\)-score detector, the main distributional input is the
watermark-induced signal itself: the shifted green-token probability
\(\gamma'(\gamma,\delta)\) and the sequence-level behavior of the indicator
process \(\{I_i\}\). Therefore, the same calibration framework can be paired
with other KGW detectors by replacing the aggregation rule and rejection
threshold. For a detector \(D\), let \(A_D(I_{1:n})\) denote its aggregation rule
applied to the green-token indicator sequence and let \(z_{1-\alpha}^{D}\)
denote its calibrated null threshold. Once \(\gamma'(\gamma,\delta)\) and the
long-run variance behavior of \(I_{1:n}\) are available, the detector-specific
power can be written as
\[
\pi_D(\gamma,\delta)
=
\Prob_\delta\{A_D(I_{1:n})>z_{1-\alpha}^{D}\}.
\]
The full-sequence \(z\)-score admits the closed-form approximation in Eq.~\ref{eq:power}. Other KGW detectors can be handled by changing \(A_D\) and calibrating \(z_{1-\alpha}^{D}\) under the null, while preserving the same underlying signal model.

This modular view is useful for common post-processing attacks. Paraphrasing,
copy-paste insertion, deletion, and local rewriting differ in surface form, but
from the keyed detector's perspective they can reduce, dilute, or remove the
green-list bias in the affected region. We model such attacks through a
signal-removal abstraction: a contiguous fraction \(f\) of the generated text is
replaced by unwatermarked text, leaving only \((1-f)n\) positions with the
watermark-induced green-token bias.

For localized attacks, we consider the complement of the WinMax detector~\cite{kirchenbauer2024reliabilitywatermarkslargelanguage}. Instead of selecting the best contiguous block to keep, WinMax-C selects the best contiguous block \(\mathcal C\) to remove and scores the remaining tokens:
\[
Z_{\mathrm{WMC}}
=
\max_{\mathcal C}
\frac{\sum_{i\notin \mathcal C} I_i-\gamma(n-|\mathcal C|)}
{\sqrt{(n-|\mathcal C|)\gamma(1-\gamma)}}.
\]
This is useful when an interior segment has been attacked, because the surviving watermark signal may lie on both sides of the attacked segment.

The \(\sqrt{1-f}\) scaling follows directly from the normalization. In the clean case, the expected full-sequence signal shift is
\[
\Delta_{\mathrm{clean}}
=
\frac{n(\gamma'-\gamma)}{\sqrt{n\gamma(1-\gamma)}}
=
\frac{\sqrt n(\gamma'-\gamma)}{\sqrt{\gamma(1-\gamma)}}.
\]
After signal removal, the full-sequence \(z\)-score still normalizes by \(\sqrt{n\gamma(1-\gamma)}\), but only \((1-f)n\) tokens carry watermark signal. Hence
\[
\Delta_{\mathrm{full}}
=
\frac{(1-f)n(\gamma'-\gamma)}{\sqrt{n\gamma(1-\gamma)}}
=
(1-f)\Delta_{\mathrm{clean}}.
\]
By contrast, WinMax-C removes the attacked block and scores only the retained \(m=(1-f)n\) tokens. Its normalization is therefore \(\sqrt{m\gamma(1-\gamma)}\), giving
\[
\Delta_{\mathrm{WMC}}
=
\frac{m(\gamma'-\gamma)}{\sqrt{m\gamma(1-\gamma)}}
=
\sqrt{1-f}\,\Delta_{\mathrm{clean}}.
\]
Thus WinMax-C retains a signal scaling of \(\sqrt{1-f}\), while the standard full-sequence \(z\)-score retains approximately \(1-f\). Equivalently, WinMax-C has a \(1/\sqrt{1-f}\) advantage in expected signal over the full-sequence \(z\)-score under this localized signal-removal model.

\begin{table}[H]
\centering
\caption{Largest \(-\log(D_{\mathrm{KL}})\) at which each method maintains at least \(0.95\) post-attack TPR using WinMax-C on C4, among configurations with clean TPR at least \(0.95\). Larger values correspond to lower KL distortion.}
\label{tab:winmaxc_attack}
\begin{tabular}{llrrrr}
\toprule
Attack & Model & DP & GRID & OPT & Ours \\
\midrule
10\% & GPT-2 & 1.97 & 2.15 & 2.03 & 2.70 \\
10\% & OPT-125M & 1.39 & 1.86 & 1.70 & 2.53 \\
10\% & Pythia-160M & 1.34 & 2.22 & 2.13 & 2.70 \\
15\% & GPT-2 & 1.97 & 2.16 & 2.01 & 2.69 \\
15\% & OPT-125M & 1.38 & 1.86 & 1.65 & 2.56 \\
15\% & Pythia-160M & 1.34 & 2.22 & 2.12 & 2.70 \\
20\% & GPT-2 & 1.98 & 2.16 & 2.01 & 2.69 \\
20\% & OPT-125M & 1.39 & 1.87 & 1.68 & 2.56 \\
20\% & Pythia-160M & 1.34 & 2.22 & 2.13 & 2.68 \\
\bottomrule
\end{tabular}
\end{table}

Across attack levels and model families, the calibrated parameters preserve high post-attack TPR at lower distortion than the baselines. These results support using the clean-generation calibration as the default parameter choice and switching to detector-specific aggregation, such as WinMax-C, when localized editing robustness is required.

\subsection{KL Divergence Computation}

At each generation step $i$, let $\Prob_i$ and $\Prob_{\delta,i}$ denote the unwatermarked and watermarked next-token distributions obtained from the model logits via softmax.
We compute the token-wise KL divergence
\[
D_{\mathrm{KL}}\!\left(\Prob_{\delta,i} \,\|\, \Prob_i\right)
= \sum_{v \in V} \Prob_{\delta,i}(v) \log \frac{\Prob_{\delta,i}(v)}{\Prob_i(v)}.
\]
The reported distortion score is obtained by averaging this quantity over all generated positions:
\[
D_{\mathrm{KL}} = \frac{1}{n} \sum_{i=1}^{n} D_{\mathrm{KL}}\!\left(\Prob_{\delta,i} \,\|\, \Prob_i\right).
\]

\paragraph{Relation to Theoretical Analysis.}
Our theoretical analysis derives a closed-form expression for the KL divergence between the induced Bernoulli distributions of the green-token indicator under watermarking and non-watermarking.
This quantity captures the distortion of the binary detection statistic and serves as a low-dimensional surrogate for watermark strength.

In experiments, we report the full-vocabulary KL divergence between next-token distributions, which directly measures distributional shift at the model output level.

\paragraph{Implementation Details.}
KL divergence is computed directly from the model logits before sampling, without truncation of the vocabulary distribution.

\subsection{\texorpdfstring{$R^2$}{R2} of Theoretical Values and Empirical Values}
\label{app:r2}
To assess the consistency between the theoretical analysis and empirical observations, we examine the linear relationship between theoretical predictions and measured empirical values. The theory implies a strict proportional relationship between the two quantities, i.e., when the theoretical value is zero, the empirical value should also be zero. Therefore, we perform \textit{linear regression without an intercept}, using the model $y = kx$ rather than the general form $y = kx + b$. Under this formulation, the coefficient of determination $R^2$ directly reflects how well the theoretical quantity explains the variation in the empirical measurements, both in trend and magnitude.

Table~\ref{tab:r2} reports the resulting $R^2$ values across different models (OPT-125M, Pythia-160M, GPT-2) and datasets (C4, LFQA, Wikipedia) for both the \emph{green-token rate} metric and the \emph{KL divergence} metric.
Data are collected with parameters $\gamma\in \{0.1,0.2,0.3, 0.4,0.5, 0.6,0.7, 0.8,0.9\}$ and $\delta\in \{0.5,1,2\}$.
In all settings, the $R^2$ values are close to 1 (mostly above 0.99), indicating that the empirical results can be well characterized as a linear scaling of the theoretical predictions. This demonstrates that the derived theoretical relationship is both qualitatively and quantitatively precise. Moreover, the consistency of these high $R^2$ values across model architectures and data distributions suggests that the relationship captures a general underlying mechanism rather than a dataset- or model-specific one.

\begin{table}[H]
\centering
\caption{$R^2$ for green-token rate and KL metrics}
\label{tab:r2}
\begin{tabular}{llrr}
\toprule
Model & Dataset & Green-token rate $R^2$ & KL $R^2$ \\
\midrule
opt-125m & c4 & 0.9967 & 0.9980 \\
opt-125m & lfqa & 0.9949 & 0.9965 \\
opt-125m & wikipedia & 0.9959 & 0.9947 \\
pythia-160m & c4 & 0.9892 & 0.9939 \\
pythia-160m & lfqa & 0.9881 & 0.9951 \\
pythia-160m & wikipedia & 0.9962 & 0.9943 \\
gpt2 & c4 & 0.9933 & 0.9984 \\
gpt2 & lfqa & 0.9920 & 0.9958 \\
gpt2 & wikipedia & 0.9949 & 0.9957 \\
\bottomrule
\end{tabular}
\end{table}

\

\subsection{Distributional Verification}

To assess the goodness-of-fit of the normal approximation, we construct Q--Q plots comparing the empirical quantiles of the standardized green-token count statistic with the theoretical quantiles of the standard normal distribution.

Figure~\ref{fig:qq_h0} shows results for sequences generated with $n=500$ tokens and green-list ratio $\gamma=0.2$.
The left panel corresponds to the null hypothesis ($\delta=0$), and the right panel corresponds to the alternative hypothesis ($\delta=1$).

In both cases, the empirical quantiles closely follow the identity line, indicating strong agreement with the normal approximation.
Minor deviations in the tails are expected for finite sample sizes and do not materially affect detection.

These results provide empirical support for the Gaussian approximation in our detection framework.

\subsection{Alternative Calibration}

To validate the theoretical characterization of watermark-induced green-token probability shifts, we conduct experiments across a wide range of watermark parameters with
$\gamma \in \{0.1,0.2,0.3,0.4,0.5,0.6,0.7,0.8,0.9\}$ and
$\delta \in \{0.5,1,2\}$.

For each configuration, we compute the empirical green-token rate and compare it to the corresponding theoretical value derived.
We fit a linear regression model without an intercept, $y = kx$, reflecting the theoretical prediction that the relationship is proportional.

Figure~\ref{fig:alt} shows the comparison across the C4, LFQA, and Wikipedia datasets.
The results exhibit strong linear alignment between empirical measurements and theoretical predictions.

To further quantify this agreement, we report the coefficient of determination ($R^2$) values for both green-token rate and KL divergence across all models and datasets in Table~\ref{tab:r2}.
In all settings, $R^2$ exceeds 0.98, indicating that the theoretical analysis accurately captures the empirical behavior across architectures and data distributions.

\subsection{Pareto Frontier Construction}

For visualization, Pareto frontier lines are constructed using Pareto points to fit the curve \[
y = 1/(1 + e^{ax - b})
\]
The curve starts near $(-\infty,1)$ because sufficiently large separation yields high power.
A smooth monotone curve is fitted to the frontier for presentation purposes only and does not affect the reported evaluation results.

\paragraph{Additional Large-Model Validation.}
Figure~\ref{fig:pareto_gemma} reports the corresponding Pareto frontier results
for Gemma-2 9B. The larger-model setting exhibits the same qualitative
detectability-distortion pattern as the main-model results in
Figure~\ref{fig:pareto}.

\begin{figure}[!htbp]
    \centering
    \includegraphics[width=0.85\linewidth]{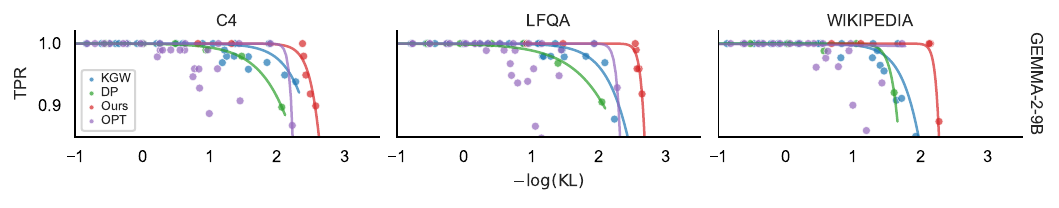}
    \caption{Statistical power (measured by TPR) versus semantic distortion (measured by KL divergence) on Gemma-2 9B. Results are obtained on all three datasets. For each method, the curve is fitted using its Pareto frontier with a small tolerance allowed for visualization quality. The fitted curve follows the form $y = 1/(1 + e^{ax - b})$.}
    \label{fig:pareto_gemma}
\end{figure}

\subsection{Text Quality Metrics}
\label{app:quality-metrics-full}

BLEU scores are computed using the NLTK implementation.
ROUGE scores are obtained using the \texttt{rouge-score} library.
BERTScore is computed using the official \texttt{bert-score} package with pretrained BERT encoders.

All metrics are computed between watermarked and unwatermarked outputs generated from identical prompts.

To quantify how the KL distortion proxy relates to sequence-level quality, we fit linear regressions of BLEU, ROUGE, and BERTScore against empirical KL divergence using all experimental configurations. The resulting coefficients of determination are \(R^2=0.566\) for BLEU, \(R^2=0.870\) for ROUGE, and \(R^2=0.791\) for BERTScore. These results indicate that KL is a useful distributional proxy for sequence-level quality, while also reflecting that surface-form metrics such as BLEU capture additional variation beyond token-level KL.

\begin{figure}[!htbp]
    \centering
    \begin{subfigure}{0.64\textwidth}
        \centering
        \includegraphics[width=\linewidth]{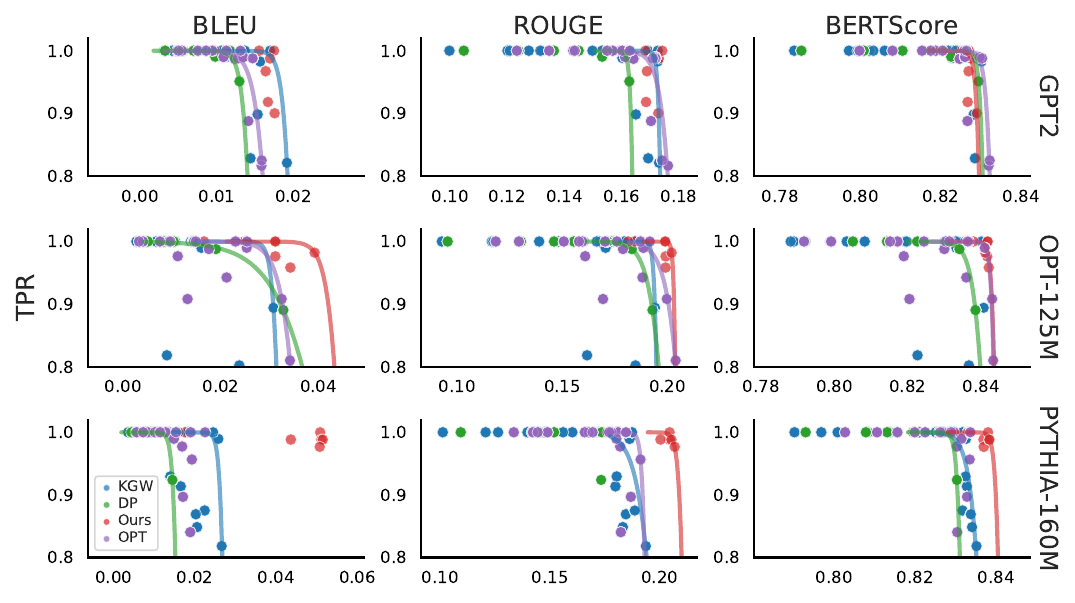}
        \caption{C4.}
    \end{subfigure}
    \vspace{0.25em}
    \begin{subfigure}{0.64\textwidth}
        \centering
        \includegraphics[width=\linewidth]{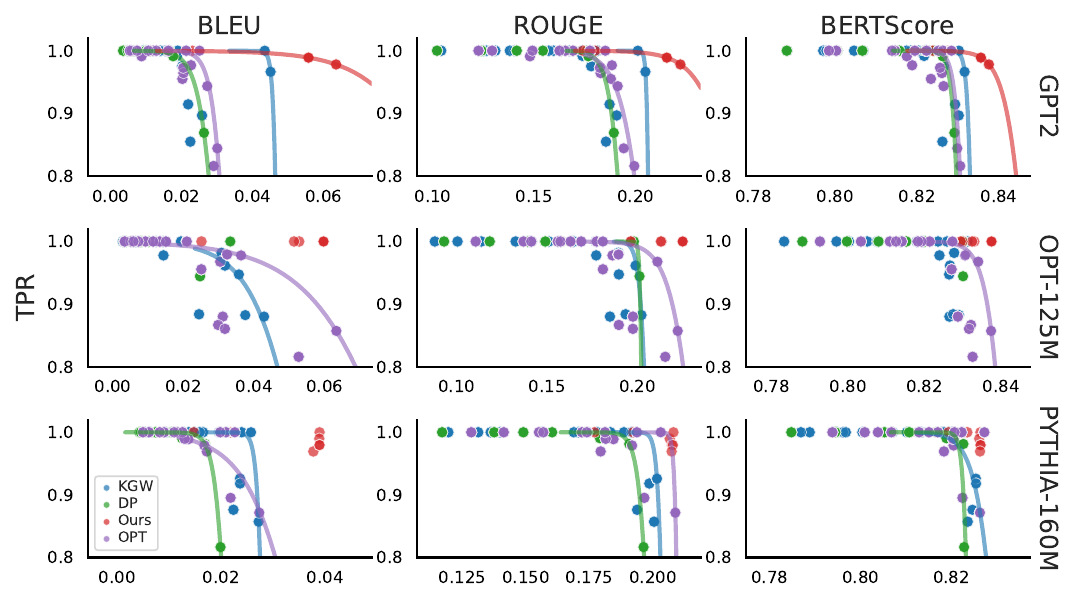}
        \caption{LFQA.}
    \end{subfigure}
    \vspace{0.25em}
    \begin{subfigure}{0.64\textwidth}
        \centering
        \includegraphics[width=\linewidth]{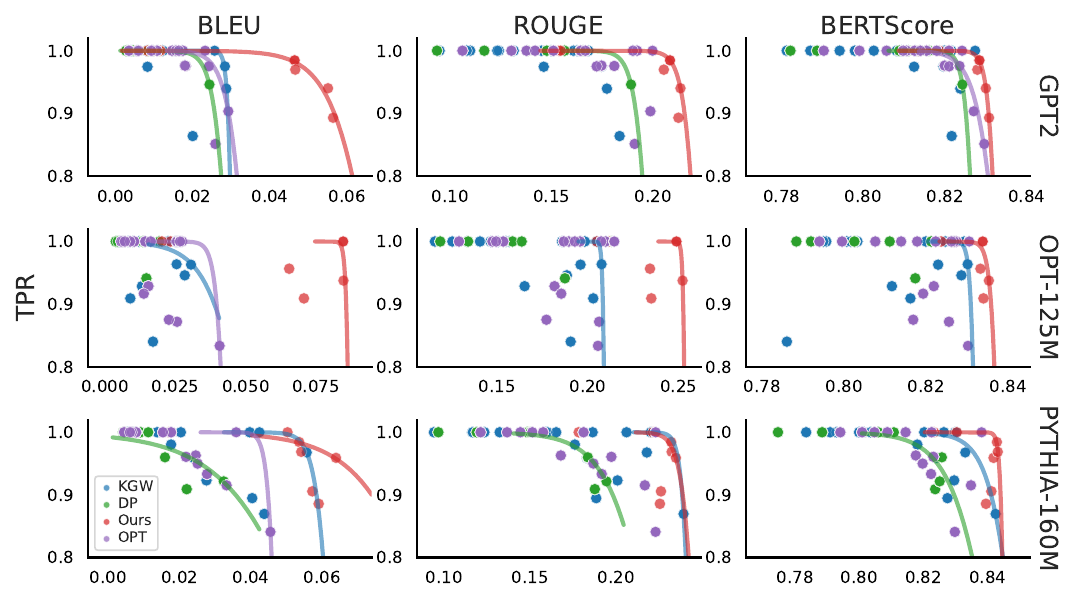}
        \caption{Wikipedia.}
    \end{subfigure}
    \caption{Complete statistical power (TPR) versus semantic quality results under BLEU, ROUGE, and BERTScore. Rows correspond to models and columns correspond to quality metrics. Higher values indicate better quality for all metrics. For each method, the curve is fitted using its Pareto frontier with a small tolerance allowed for visualization quality.}
    \label{fig:quality_metrics_full}
\end{figure}

\clearpage

\section{Estimation of the Variance Inflation Constant \texorpdfstring{$c$}{c}}
\label{app:c-est}
\subsection{Background}

In practice, tokens generated by language models are \emph{not independent} due to contextual and semantic dependencies. To account for deviation from the i.i.d. assumption, we introduce a \textit{variance inflation constant} $c$:
\[
\mathrm{Var}(S_n) = c \cdot n\,\gamma'(1 - \gamma').
\]

The interpretation of $c$ is:
\begin{itemize}
    \item $c = 1$: IID behavior (no correlation),
    \item $c > 1$: positive correlation between tokens (variance inflated),
    \item $c < 1$: negative correlation (variance deflated).
\end{itemize}

\subsection{Estimation Procedure}

Suppose we collect \( M \) independent sequences generated under the same watermark configuration \( (\gamma, \delta, n) \). For each sequence \( j \), define the total number of green tokens as
\[
S_j = \sum_{t=1}^{n} I_{j,t}, \quad j = 1, \ldots, M,
\]
where \( I_{j,t} \) is the indicator variable denoting whether the \( t \)-th token in sequence \( j \) is classified as green.

The empirical mean green-token count is
\[
\bar{S} = \frac{1}{M} \sum_{j=1}^{M} S_j,
\]
and the unbiased sample variance across sequences is
\[
\widehat{\mathrm{Var}}(S) = \frac{1}{M-1} \sum_{j=1}^{M} (S_j - \bar{S})^2.
\]

From the sample mean count, we obtain an empirical estimate of the effective green-token rate:
\[
\hat{\gamma}' = \frac{\bar{S}}{n}.
\]

To quantify the deviation from the ideal i.i.d. Bernoulli assumption, we estimate a \textit{variance inflation constant}
\[
\hat{c} =
\frac{\widehat{\mathrm{Var}}(S)}
     {n\,\hat{\gamma}'(1 - \hat{\gamma}')}.
\]

Under the i.i.d.\ model, the denominator \( n\,\hat{\gamma}'(1-\hat{\gamma}') \) represents the theoretical variance of a binomial random variable with parameters \( n \) and \( \hat{\gamma}' \). Therefore, \( \hat{c} \approx 1 \) indicates consistency with independence, while \( \hat{c} > 1 \) reveals overdispersion induced by token dependence and model dynamics. This statistic thus provides a practical uncertainty quantification tool for assessing violations of the i.i.d.\ assumption in real model outputs.

We conducted experiments with \( \gamma = 0.2 \) and \( \delta = 1 \) to empirically validate this variance inflation phenomenon. Beyond confirming systematic overdispersion, we further observed \textit{distinct model- and dataset-dependent patterns} in \( \hat{c} \).

\begin{figure}[H]
    \centering
    \includegraphics[width=0.7\linewidth]{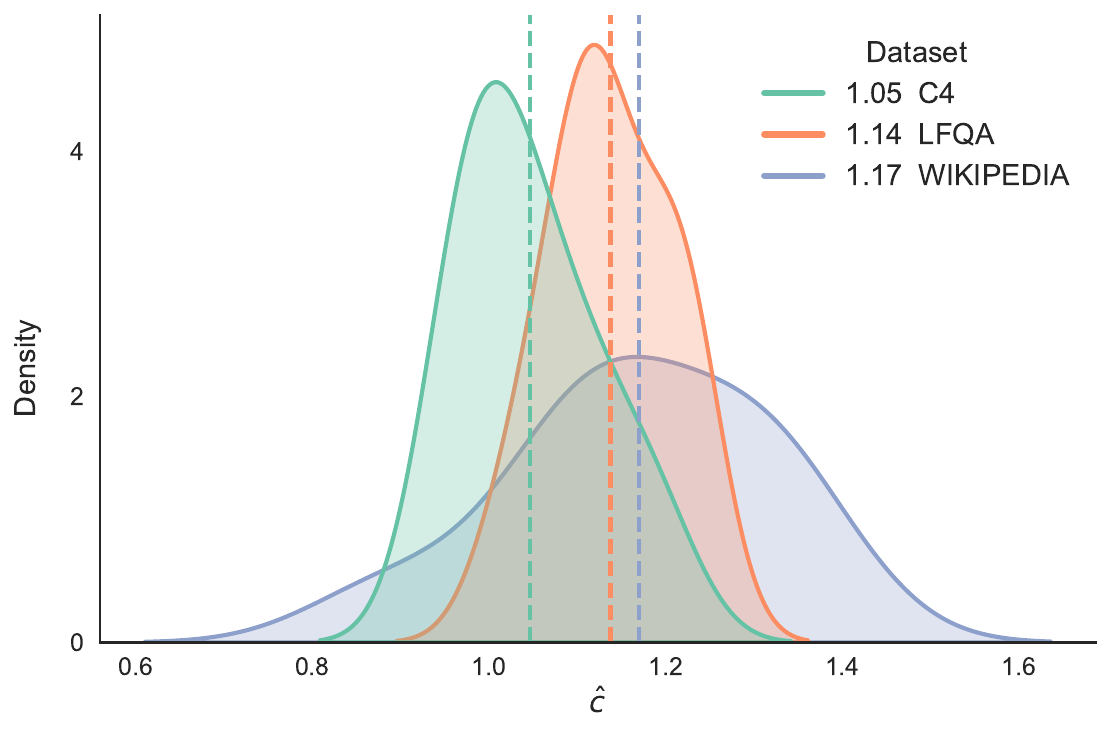}
    \caption{Smoothed inflation constant distribution, categorized by dataset}
    \label{fig:kde}
\end{figure}

The variance inflation constant \(c\) can be interpreted as a signature of dependence induced by the underlying generation mechanism, and therefore naturally varies across datasets. Among the datasets considered, Wikipedia yields the largest observed inflation. Wikipedia articles are typically topic-stable, stylistically regular, and logically coherent over long spans. These properties reduce the effective token choice space at each step and create persistent semantic trajectories. Consequently, the cumulative covariance in \(\mathrm{Var}(S)\) is increased, explaining why
\[
c_{\text{Wikipedia}} > c_{\text{LFQA}} > c_{\text{C4}}.
\]
This demonstrates that variance inflation reflects how strongly the data distribution enforces semantic continuity.

\section{Limitations}
\label{app:discuss}
While our framework demonstrates strong performance, our theoretical assumptions warrant further discussion. First, although the information decay assumption facilitates asymptotic analysis, the memory mechanisms in modern Transformers may result in a slow-decay scenario. This suggests that in non-asymptotic regimes, the theoretical convergence could be less precise than idealized models predict. Second, in low-entropy domains such as code generation, where next-token probabilities approach singularity, the inherent randomness required for hiding the watermark is naturally constrained, suggesting a boundary where the stationarity assumption may need adaptation. Finally, our optimization targets KL divergence, a distributional metric. Because semantic distances between tokens are non-uniform, there remains room for semantic-aware watermarking designs.

Watermarking systems also introduce deployment risks beyond statistical calibration. False positives can create false accusations if detector outputs are treated as definitive evidence, especially in high-stakes settings such as education, hiring, or content moderation. Conversely, adversaries may attempt paraphrasing, editing, or signal-removal attacks to reduce detectability; accordingly, watermark evidence should not be interpreted as a complete provenance guarantee. Practical deployments should therefore pair calibrated false-positive control with transparent reporting of detector uncertainty, clear scope limitations, and human review when decisions materially affect users.

\section{Demonstration of \texorpdfstring{$\gamma_0$}{gamma0}}
\label{app:dem-g0}

We show that using $\gamma<\gamma_0(\delta)$ does not yield optimal results. Red dots in the figures represent lower values of $\gamma$, and the results align with our theory.
\begin{figure}[!htbp]
    \centering
    \includegraphics[width=0.7\linewidth]{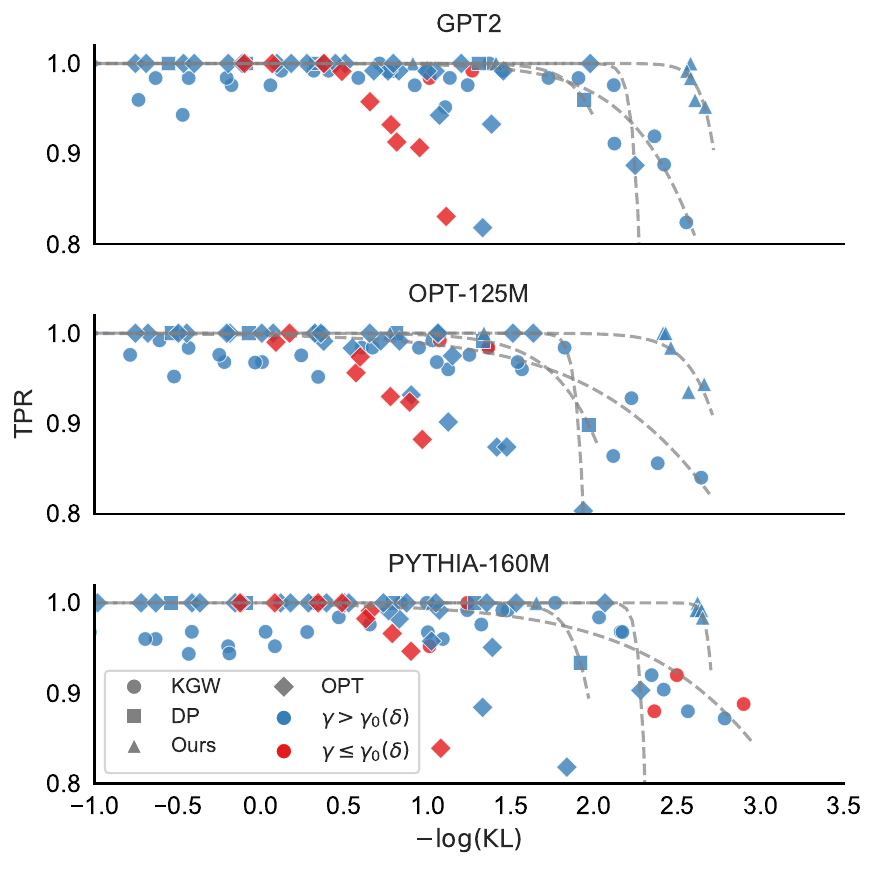}
    \caption{Statistical power (measured by TPR) versus semantic distortion (measured by KL divergence). Results are obtained on the C4 dataset. For each method, the curve is fitted using its Pareto frontier with a small tolerance allowed for visualization quality. The fitted curve follows the form $y = 1/(1 + e^{ax - b})$. This figure is colored according to the relative values of $\gamma$ and $\gamma_0(\delta)$.}
    \label{fig:gamma_0_c4}
\end{figure}
\begin{figure}[!htbp]
    \centering
    \includegraphics[width=0.7\linewidth]{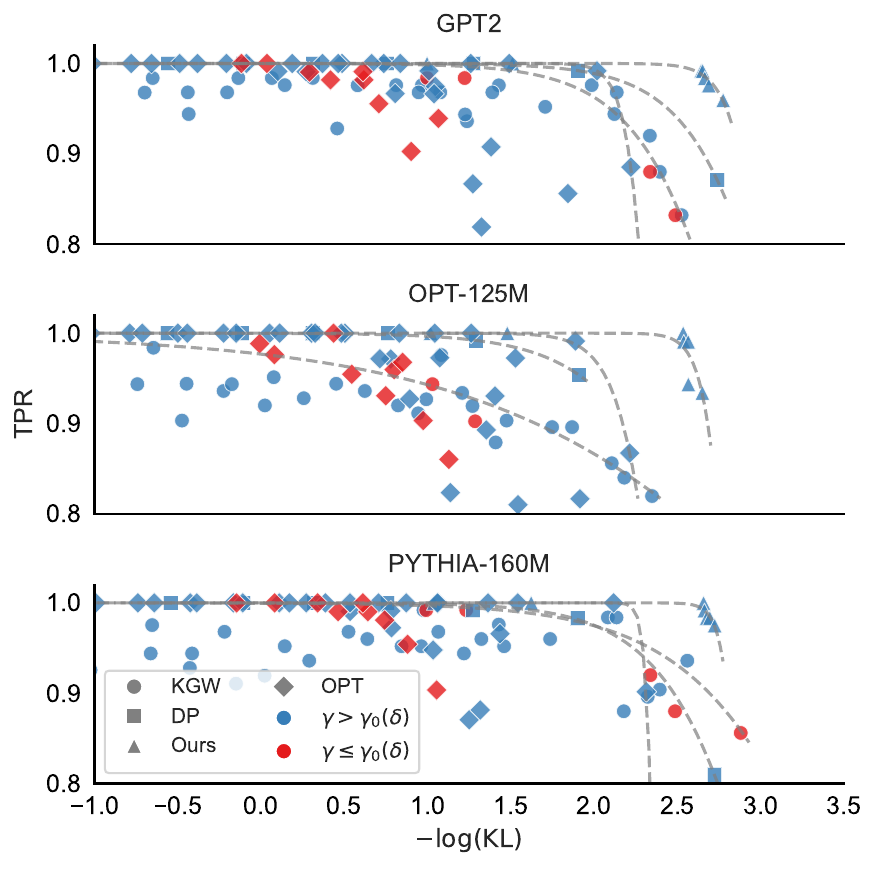}
    \caption{Statistical power (measured by TPR) versus semantic distortion (measured by KL divergence). Results are obtained on the LFQA dataset. For each method, the curve is fitted using its Pareto frontier with a small tolerance allowed for visualization quality. The fitted curve follows the form $y = 1/(1 + e^{ax - b})$. This figure is colored according to the relative values of $\gamma$ and $\gamma_0(\delta)$.}
    \label{fig:gamma_0_lfqa}
\end{figure}
\begin{figure}[!htbp]
    \centering
    \includegraphics[width=0.7\linewidth]{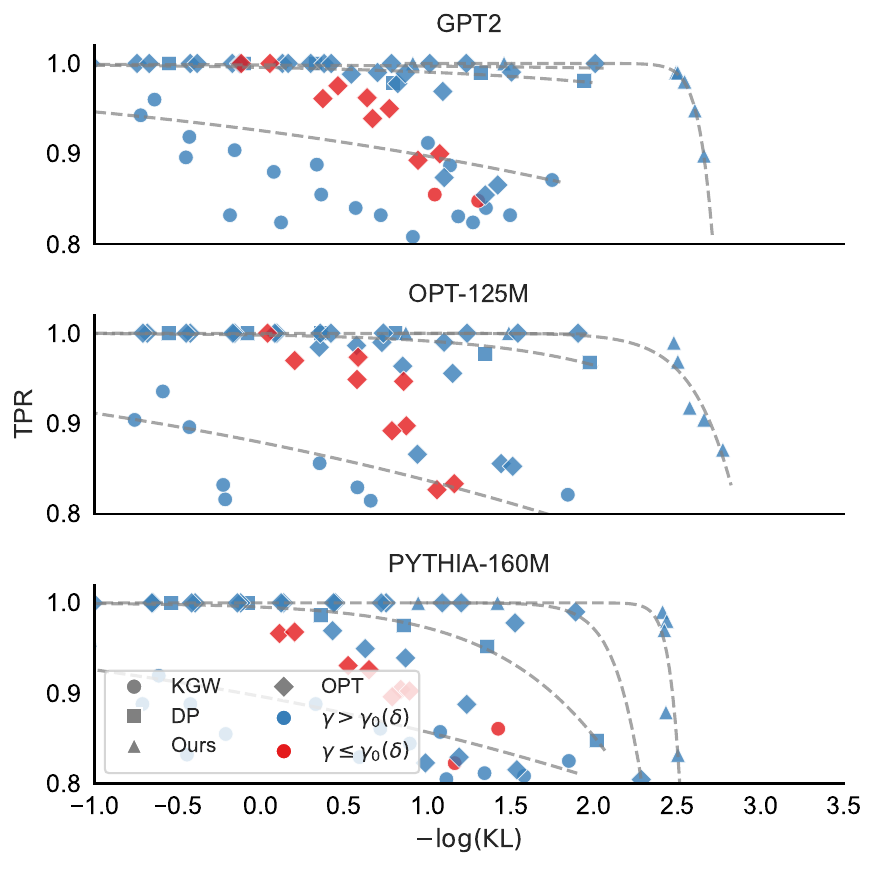}
    \caption{Statistical power (measured by TPR) versus semantic distortion (measured by KL divergence). Results are obtained on the Wikipedia dataset. For each method, the curve is fitted using its Pareto frontier with a small tolerance allowed for visualization quality. The fitted curve follows the form $y = 1/(1 + e^{ax - b})$. This figure is colored according to the relative values of $\gamma$ and $\gamma_0(\delta)$.}
    \label{fig:gamma_0_wiki}
\end{figure}
\end{document}